\newcommand{\papertitle}{Stable Model Counting and Its Application in Probabilistic Logic Programming}
\newtheorem{theorem}{Theorem}
\newtheorem{example}{Example}
\newtheorem{definition}{Definition}
\newtheorem{corollary}[theorem]{Corollary}
\newtheorem{proposition}[theorem]{Proposition}
\newcommand{\printsolver}[1]{\textsc{#1}\xspace}
\newcommand{\dlv}{\printsolver{dlv}}
\newcommand{\clasp}{\printsolver{clasp}}
\newcommand{\Problog}{\printsolver{Problog2}}
\newcommand{\ASProblog}{\printsolver{ASProblog}}
\newcommand{\ASProblogS}{\printsolver{ASProblogS}}
\newcommand{\dsharp}{\printsolver{Dsharp}}
\newcommand{\sharpsat}{\printsolver{sharpSAT}}
\newcommand{\true}{\mathit{true}}
\newcommand{\false}{\mathit{false}}
\newcommand{\vars}{\mathit{vars}}
\newcommand{\countt}{\mathit{count}}
\newcommand{\shadow}{\mathit{copy}}
\newcommand{\project}{\mathit{prj}}
\newcommand{\projectr}{\mathit{prj_{copy}}}
\newcommand{\Founded}{\VV_{\cal F}}
\newcommand{\Standard}{\VV_{\cal S}}
\newcommand{\VV}{{\cal V}}
\newcommand{\JA}{\mathit{JA}}
\newcommand{\Least}{\mathit{Least}}
\newcommand{\stress}{\mathit{stress}}
\newcommand{\influences}{\mathit{influences}}
\newcommand{\inn}{\mathit{in}}
\newcommand{\reach}{\mathit{reach}}
\newcommand{\node}{\mathit{node}}
\newcommand{\edge}{\mathit{edge}}
\newcommand{\residual}[2]{#1|_{#2}}
\newcommand{\justres}[2]{#1|^{\mathit{j}}_{#2}}
\newcommand{\aspsat}{ASP-SAT{~}}
\newcommand{\GraphReliability}{\textsl{GraphRel}\xspace}
\newcommand{\SmokersFriends}{\textsl{SmokersFriends}\xspace}
\newcommand{\ddnnf}{d-DNNF\xspace}
\newcommand{\ignore}[1]{}
\newcommand{\noproofs}[1]{}
\renewcommand{\noproofs}[1]{#1}
\newcommand{\authornames}{Rehan Abdul Aziz \and Geoffrey Chu \and Christian Muise \and Peter Stuckey \\
National ICT Australia, Victoria Laboratory \thanks{NICTA is funded by the Australian Government as represented by the
Department of Broadband, Communications and the Digital Economy and the
Australian Research Council through the ICT Centre of Excellence program.} \\ 
Department of Computing and Information Systems \\ The University of Melbourne\\ }
\begin{document}

%
\title{\papertitle}
\author{\authornames}
\maketitle
\begin{abstract}
\begin{quote}
Model counting is the problem of computing the number of models that satisfy a given propositional theory. 
It 
has recently been applied to solving inference tasks in probabilistic logic programming, where the goal is to compute the probability of given queries being true provided a set of mutually independent random variables, a model (a logic program) and some evidence. The core of solving this inference task involves translating the logic program to a propositional theory and using a model counter. 
In this paper, we show that for some problems that involve inductive definitions like reachability in a graph, the translation of logic programs to SAT can be expensive for the purpose of solving inference tasks. For such problems, direct implementation of stable model semantics allows for more efficient solving. We present two implementation techniques, based on unfounded set detection, that extend a propositional model counter to a stable model counter. Our experiments show that for particular problems, our approach can outperform a state-of-the-art probabilistic logic programming solver by several orders of magnitude in terms of running time and space requirements, and can solve instances of significantly larger sizes on which the current solver runs out of time or memory. 
\end{quote}
\end{abstract}

\section{Introduction}

Consider the counting version of graph reachability problem: given a directed graph, count the number
of subgraphs in which node $t$ is reachable from node $s$ \cite{valiant_reliability}.
This problem can be naturally modeled as a logic program under stable model semantics \cite{stable_models}. 
Let us say that the input is given by two predicates:
$\node(X)$ and $\edge(X,Y)$. For each node, we can introduce a decision variable $\inn$
that models whether the node is in the subgraph. Furthermore, we can model reachability ($\reach$) from $s$ as an inductive definition using
the following two rules: $\reach(s) \leftarrow \inn(s)$ and $\reach(Y) \leftarrow \inn(Y), \reach(X), \edge(X,Y)$. The first
rule says that $s$ itself is reachable if it is in the subgraph. The second rule is the inductive case, specifying
that a node $Y$ is reachable if it is in the subgraph and there is a reachable node $X$ that has an edge to it.
Additionally, say there are arbitrary constraints in our problem, e.g., only consider subgraphs where a certain $y$ is also reachable from $s$ etc.
This can be done using the integrity constraint: $\leftarrow \neg reach(y)$. The number of stable models of this program
is equal to the number of solutions of the problem.

There are at least two approaches to counting the stable models of a logic program.
The first is to translate a given logic program to a propositional theory such that there is a one-to-one
correspondence between the propositional models of the translated program and the stable models of the original program, and use SAT model counting
\cite{model_counting}. We show that this approach does not scale well in practice
since such translations, if done \emph{a priori}, can grow exponentially with the input size.
The second approach is to use an \emph{answer set programming} (ASP) solver like \clasp \cite{clasp} or \dlv \cite{dlv}
and enumerate all models. This approach is extremely inefficient since model counting algorithms
have several optimizations like caching and dynamic decomposition that are not present in ASP solvers.
This motivates us to build a stable model counter that can take advantage of state-of-the-art ASP technology
which combines partial translation and lazy unfounded set \cite{vangelder} detection.
However, we first show that it is not correct to naively combine partial translation and lazy unfounded set detection with SAT
model counters due to the aforementioned optimizations in model counters.
We then suggest two approaches to properly integrate unfounded set propagation in a model counter.

We show that we can apply our algorithms to solve probabilistic logic programs \cite{problog_concepts}.
Consider the probabilistic version of the above problem, also called the
\emph{graph reliability} problem \cite{graph_reliability}. In this version, each node can be in the subgraph with a
certain probability $1 - p$, or equivalently, fail with the probability $p$. We can model this by simply
attaching probabilities to the $\inn$ variables. We can model observed \emph{evidence} as constraints. E.g.,
if we have evidence that a certain node $y$ is reachable from $s$, then we can model this as the unary \emph{constraint} (not rule): $\reach(y)$.
The goal of the problem is to calculate the probability of node $t$ being reachable from node $s$ given the evidence.
The probabilistic logic programming solver \Problog \cite{problog} approaches this inference task by reducing it to weighted model
counting of the translated propositional theory of the original logic program. We extend \Problog to use our implementation of  
stable model counting on the original logic program 
and show that 
our approach is more scalable.

\section{Preliminaries}

We consider propositional variables $\VV$.
Each $v \in \VV$ is (also) a  positive literal, and $\neg v, v \in \VV$ is a
negative literal.
Negation of a literal, $\neg l$ is $\neg v$ if $l = v$, and $v$ 
if $l = \neg v$. 
An assignment $\theta$ is a set of literals  which represents the literals which are true in the assignment,
where $\forall v \in \VV. \{v, \neg v\} \not\subseteq \theta$.
If $o$ is a formula or assignment, let $vars(o)$ 
be the subset of $\VV$ appearing in $o$. 
Given an assignment $\theta$, let $\theta^+ = \{  v \in \theta ~|~ v \in
\VV\}$ and $\theta^- = \{ \neg v \in \theta ~|~ v \in \VV \}$. 
Two assignments $\theta_1$ and $\theta_2$ \emph{agree on variables $V \subseteq
\VV$}, written $\theta_1 =_{V} \theta_2$, if $vars(\theta_1^+) \cap V = vars(\theta_2^+) \cap V$ and
$vars(\theta_1^-) \cap V = vars(\theta_2^-) \cap V$.
Given a partial assignment $\theta$ and a Boolean formula
$F$, let $\residual{F}{\theta}$ be the \emph{residual} of $F$ w.r.t. $\theta$.
$\residual{F}{\theta}$ is constructed from $F$ by substituting each
literal $l \in \theta$ with $\true$ and each literal $\neg l \in \theta$
with $\false$ and simplifying the resulting formula.
For a formula $F$, $\countt(F)$ is the number of assignments to $\vars(F)$
that satisfy $F$.

\subsection{DPLL-Based Model Counting}

State of the art SAT model counters are very similar to SAT solvers, but
have three important optimisations.  The first optimisation is to count
solution \emph{cubes} (i.e., partial assignments $\theta$, 
$vars(\theta) \subseteq \VV$ whose every extension is a
solution) instead of individual solutions. Consider the Boolean formula:
$F_1 = \{\neg b \vee a$, $\neg c \vee \neg a \vee b$, $\neg d \vee c$, $\neg
e \vee c\}$. Suppose the current partial assignment is $\{a, b, c\}$. The formula is 
already satisfied irrespective of values of $d$ and $e$.
Instead of searching further and finding all 4 solutions, we can stop
and record that we have found a solution cube containing $2^k$ solutions,
where $k$ is the number of unfixed variables.

The second important optimisation is caching. Different partial assignments
can lead to identical subproblems which contain the same number of
solutions. By caching such counts, we can potentially save significant
repeated work. For a formula $F$ and an assignment $\theta$, the number of
solutions of $F$ under the subtree with $\theta$ is given by $2^{|vars(F)| -
  |vars(\theta)| - |vars(F|_{\theta})|} \times
\countt(\residual{F}{\theta})$. We can use the residual as the key and cache
the number of solutions the subproblem has. For example, consider $F_1$
again. Suppose we first encountered the partial assignment $\theta_1 = \{d,
c\}$. Then $\residual{F_1}{\theta_1} = \{\neg b \vee a, \neg a \vee
b\}$. After searching this subtree, we find that this subproblem has 2
solutions and cache this result. The subtree under $\theta_1$ thus has
$2^{5-2-2} \times 2 = 4$ solutions. Suppose we later encounter $\theta_2 = \{\neg
d, e, c\}$. We find that $\residual{F_1}{\theta_2}$ is the same as
$\residual{F_1}{\theta_1}$. By looking it up in the cache, we can see that
this subproblem has 2 solutions. Thus
the subtree under $\theta_2$ has $2^{5-3-2} \times 2 = 2$ solutions.

The last optimisation is dynamic decomposition. Suppose after fixing
some variables, the residual decomposes into two or more formulas involving
disjoint sets of variables. We can count the number of solutions for
each of them individually and multiply them together to get the right
result.  Consider $F_2 = \{a \vee \neg b \vee c, c \vee \neg d \vee e, e
\vee f\}$ and a partial assignment $\{\neg c\}$.  The residual program can be
decomposed into two components $\{a \vee \neg b\}$ and $\{\neg d \vee e, e \vee f\}$ with variables $\{a,b\}$
and $\{\neg d \vee e, e \vee f\}$ respectively. Their counts are 3 and 5 respectively, therefore, the number of
solutions for $F_2$ that extend the assignment $\{\neg c\}$ is
$3 \times 5 = 15$.  The combination of the three optimisations
described above into a DPLL style backtracking algorithm has been shown to
be very efficient for model counting. See
\cite{dpll_with_caching,model_counting,cachet} for more details.

\subsection{Answer Set Programming}

We consider $\VV$ split into two disjoint sets of variables
\emph{founded} variables ($\Founded$) and \emph{standard} variables
($\Standard$).  
An \aspsat \emph{program} $P$ is a tuple $(\VV, R, C)$ where
$R$ is a set of \emph{rules} of form:
$a \leftarrow b_1 \wedge \ldots \wedge b_n \wedge \neg c_1 \wedge \ldots \wedge \neg c_m$ 
such that $a \in \Founded$ and $\{b_1, \ldots, c_m\} \subseteq \VV$ and $C$ is a set of \emph{constraints} over the
variables represented as disjunctive clauses.
A rule is \emph{positive} if its body only contains positive founded literals.
The \emph{least assignment} of a set of positive rules $R$, 
written $\Least(R)$ is one that that satisfies all the rules and contains the least number of positive literals. 
Given an assignment $\theta$ and a program $P$, 
the reduct of $\theta$ w.r.t. $P$, written, $P^\theta$ is
a set of positive rules that is obtained as follows: 
for every rule $r$, if any $c_i \in \theta$, or $\neg b_j \in \theta$ for any
\emph{standard} positive literal, then $r$ is discarded, otherwise, 
all negative literals and standard variables are removed from $r$
and it is included in the reduct. 
An assignment $\theta$ is a stable model of a program $P$ iff it satisfies all its constraints
and $\theta =_{\Founded} \Least(P^\theta)$.
Given an assignment $\theta$ and a set of rules $R$, the residual rules $\residual{R}{\theta}$ are defined similarly to residual clauses
by treating every rule as its logically equivalent clause. 
A program is \emph{stratified} iff it admits a mapping
$level$ from $\Founded$ to non-negative integers 
where for each rule in the program s.t., referring to the above rule form, $level(a) > level(c_i)$ whenever $c_i \in \Founded$ for $1 \leq i \leq m$ and 
$level(a) \geq level(b_i)$ whenever $b_i \in \Founded$ for $1 \leq i \leq n$.
In ASP terms, standard variables, founded variables and constraints are equivalent to
\emph{choice} variables, regular ASP variables, and integrity constraints resp.
We opt for the above representation because it is closer to SAT-based implementation of modern ASP solvers.

\section{SAT-Based Stable Model Counting}
\label{sec:sm_counting}

The most straight forward approach to counting the stable models of a logic
program is to translate the program into propositional theory and use a
propositional model counter. As long as the translation produces a
one-to-one correspondence between the stable models of the program and the
solutions of the translated program, we get the right stable model
count. Unfortunately, this is not a very scalable approach. Translations
based on adding loop formulas~\cite{assat} or the
\emph{proof-based} translation used in \Problog~\cite{problog} require the addition
of an exponential number of clauses in general (see \cite{exponential_loops} and \cite{problog_sdd} respectively).
Polynomial sized translations based on
\emph{level rankings} \cite{lp2sat} do exist, but do not produce a one to
one correspondence between the stable models and the solutions and thus are
inappropriate for stable model counting.

Current state of the art SAT-based ASP solvers do not rely on a full
translation to SAT. Instead, they rely on lazy unfounded set detection. In
such solvers, only the rules are translated to SAT. There is an extra
component in the solver which detects unfounded sets and lazily adds the
corresponding loop formulas to the program as 
required~\cite{clasp_journal}. Such an approach is much more scalable for
solving ASP problems. However, it cannot be naively combined
with a standard SAT model counter algorithm. This is because the SAT model
counter requires the entire Boolean formula to be available so that it can
check if all clauses are satisfied to calculate the residual program. 
However, in this case, the loop formulas are being lazily generated
and many of them are not yet available to the model counter. Naively
combining the two can give the wrong results, as illustrated in the next
example.

\begin{example}
\label{ex:incorrect}
Consider a program $P_1$ with founded variables $\{a, b\}$, 
standard variables $\{s\}$ and rules: $\{a \leftarrow b,
b \leftarrow a, a \leftarrow s\}$. 
There are only two stable models of the program $\{a,b,s\}$ and
$\{\neg a, \neg b, \neg s\}$. 
If our partial assignment is $\{a,b\}$, then the residual program contains
an empty theory which means that the number of solutions extending this assignment is $2$ (or $2^{|\{s\}|}$).
This is clearly wrong, 
since $\{a,b,\neg s\}$ is not a stable model of the program.

Now consider $P_2$ which is equal to $P_1$ with these additions: founded variable $c$, standard variables $\{t,u\}$ and two rules: $c \leftarrow a \wedge t$
and $b \leftarrow u$.
Consider the partial assignment $\{u,a,b,s\}$, the residual program has only one rule: $c \leftarrow t$. It has two
stable models, $\{c,t\}$ and $\{\neg c, \neg t\}$. Now, with the partial assignment $\{\neg u,a,b\}$, we get the same residual program and the number
of solutions should be: $2 \times 2^{|\{s\}|} = 4$ which is wrong since $s$ cannot be false in order for $a,b$ to be true when $u$ is false, i.e., 
$\{\neg u, c,t,a,b,\neg s\}$ and $\{\neg u, \neg c, \neg t,a,b,\neg s\}$ are not stable models of $P_2$.
\end{example}

In order to create a stable model counter which can take advantage of the
scalability of lazy unfounded set detection, we need to do two things: 1)
identify the conditions for which the ASP program is fully satisfied and
thus we have found a cube of stable models, 2) identify what the residual of
an ASP program is so that we can take advantage of caching and dynamic
decomposition.

\subsection{Searching on Standard Variables for Stratified Programs}
\label{sec:fixed}
The first strategy is simply to restrict the search to standard variables. 
If the program is stratified,
then the founded variables of the program are 
functionally defined by the standard variables
of the program. 
Once the standard variables are fixed, all the founded variables are 
fixed through
propagation (unit propagation on \emph{rules} and the unfounded set propagation).
It is important in this approach that the propagation on the founded variables is only carried out on the rules
of the program, and not the constraints. Constraints involving founded variables should only be \emph{checked}
once the founded variables are fixed.
The reader can verify that in Example 1, if we decide on standard variables first, then none of the problems occur.
E.g., in $P_1$, if $s$ is fixed to either true or false, then we do not get any wrong stable model cubes.
Similarly, in $P_2$, if we replace the second assignment with $\{\neg u, s\}$ which propagates $\{a,b\}$,
we still get the same residual program, but in this case, it is correct to use the cached value.
Note that stratification is a requirement for all probabilistic logic programs
under the distribution semantics~\cite{distribution_semantics}. 
For such programs given an assignment to standard variables, 
the well-founded model of the resulting program is the unique stable model.

\subsection{Modifying the Residual Program}
\label{sec:modify}

In ASP solving, it is often very useful to make decisions on founded variables as it can significantly prune the search space.
For this reason, we present a more novel approach to overcome the problem demonstrated in Example \ref{ex:incorrect}.

The root problem in Example \ref{ex:incorrect} in both cases is the failure to distinguish
between a founded variable being true and being \emph{justified}, i.e.,
can be inferred to be true from the rules and current standard and negative literals. 
In the example, in $P_1$, $a$ and $b$ are made true 
by search (and possibly propagation) but they are not justified
as they do not necessarily have externally supporting rules (they are not true under stable model semantics if we set $\neg s$). 
In ASP solvers, this is not a problem since the existing unfounded set detection algorithms guarantee that in \emph{complete} assignments, 
a variable being true implies that it is justified. 
This is not valid for \emph{partial} assignments, which we need for counting stable model cubes.
Next, we show that if we define the residual rules (not constraints) of a
program in terms of justified subset of an assignment, then we can 
leverage a propositional model counter augmented with unfounded set detection to correctly compute stable
model cubes of a program. In order to formalize and prove this, we need further definitions.

Given a program $P = (\VV, R, C)$ and a partial assignment $\theta$, the \emph{justified assignment}
$\JA(P, \theta)$ is the subset of $\theta$ that includes all standard and founded negative literals plus all
the positive founded literals implied by them using the rules of the program. More formally, let 
$J_0(\theta) = \theta^- \cup \{v \in \theta | v \in \Standard\} $. Then,
$\JA(P, \theta) = J_0(\theta) \cup \{v \in \Founded | v \in \theta, 
v \in \Least(\residual{R}{J_0(\theta)})  \}$.

\begin{definition}
\label{def:justres}
Given a program $P = (\VV, R, C)$ and a partial assignment $\theta$, let $J = \JA(P, \theta)$ and $U = \vars(\theta) \setminus \vars(J)$.
The justified residual program of $P$, w.r.t. $\theta$ is written $\justres{P}{\theta}$ and is equal to $(W,S,D)$
where $S = \residual{R}{J}$, $D = \residual{C}{\theta} \cup \{u | u \in U\}$ and $W = \vars(S) \cup \vars(D)$.
\end{definition}

\begin{example}
\label{ex:residual}
Consider a program $P$ with founded variables $\{a,b,c,d,e,f\}$, standard variables $\{s,t,u,x,y,z\}$ and
the following rules and constraints:
$$
\begin{array}{llll}
a \leftarrow b. & c \leftarrow d. & e \leftarrow \neg f. & \neg s \vee \neg t \\
b \leftarrow a. & d \leftarrow u. & f \leftarrow \neg e. & a \vee b \\
a \leftarrow s. &                 &                      & f \vee x \\
b \leftarrow t. &                 &                      & \\
\end{array}
$$
Let $\theta = \{a,b,d,u,\neg e, c, f\}$. Then, $J_0(\theta) = \{u, \neg e\}$
and $\JA(P, \theta) = J_0(\theta) \cup \{d,f,c\}$. 
The justified residual program w.r.t. $\theta$ has all the rules in the first column
and has the constraints: \{$\neg s \vee \neg t$, $a$, $b$\}. 
\end{example}


\newcommand{\thmain}{
Given an \aspsat program $P = (\VV, R, C)$ and a partial assignment $\theta$, let $\justres{P}{\theta} = (W, S, D)$ be denoted by $Q$.
Let the remaining variables be $\VV_r = \VV \setminus (W \cup \vars(\theta))$ and $\pi$ be a complete assignment over $W$.
Assume any founded variable for which there is no rule in $S$ is false in $\theta$.
\begin{enumerate}
\item If $\pi$ is a stable model of $Q$, then for any assignment $\theta_r$ over the remaining variables, 
$\theta \cup \pi \cup \theta_r$ is a stable model of $P$.
\item For a given assignment $\theta_r$ over remaining variables, if $\theta \cup \pi \cup \theta_r$ is a stable model of $P$, 
then $\pi$ is a stable model of $Q$.
\end{enumerate}
}

\newcommand{\cormain}{
Let the set of rules and constraints of $Q$ decompose into $k$ \aspsat programs $Q_1 =  (W_1, S_1, D_1), \ldots, Q_k = (W_k, S_k, D_k)$
where $W_i = \vars(S_i) \cup \vars(D_i)$ s.t. for any distinct $i,j$ in $1 \ldots k$, $W_i \cap W_j = \emptyset$.
Let the remaining variables be: $\VV_r = \VV \setminus (W_1 \cup \ldots \cup W_k \cup \vars(\theta))$ and
let $\pi_1, \ldots, \pi_k$ be complete assignments over $W_1, \ldots, W_k$ respectively.
\begin{enumerate}
\item If $\pi_1, \ldots, \pi_k$ are stable models of $Q_1, \ldots, Q_k$ resp., then for any assignment $\theta_r$ over the remaining variables, 
$\theta \cup \pi_1 \cup \ldots \cup \pi_k \cup \theta_r$ is a stable model of $P$.
\item For a given assignment $\theta_r$ over remaining variables, if $\theta \cup \pi_1 \cup \ldots \cup \pi_k \cup \theta_r$ is a stable model of $P$, 
then $\pi_i$ is a stable model of $Q_i$ for each $i \in 1 \ldots k$.
\end{enumerate}
}

\begin{theorem}
\label{th:main}
\thmain
\end{theorem}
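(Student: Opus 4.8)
The plan is to reduce both directions to the two conditions that define a stable model in the preliminaries: an assignment is a stable model of a program iff it satisfies every constraint and, on the founded variables, agrees with the least model of the reduct. Writing $\sigma := \theta \cup \pi \cup \theta_r$, I would verify these two conditions for $\sigma$ against $P$ and show that they are equivalent to the same two conditions for $\pi$ against $Q = \justres{P}{\theta}$.

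First I would extract the meaning of the standing hypothesis that every founded variable with no rule in $S$ is false in $\theta$. Contrapositively, a founded variable that $\theta$ leaves unassigned must head a rule of $S$, hence lie in $\vars(S) \subseteq W$; therefore $\VV_r$ contains only standard variables. Since standard variables are stripped when forming the reduct and occur neither in $\residual{C}{\theta}$ nor in $S$, neither $\Least(P^\sigma)$ nor the satisfaction of the constraints depends on $\theta_r$ --- this is exactly what licenses the ``for any $\theta_r$'' in part~1, and I would settle it before anything else.

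Next I would dispatch the constraint condition. By definition of the residual, $\sigma$ satisfies $C$ iff $\pi$ satisfies $\residual{C}{\theta}$ (note $\pi$ assigns all of $\vars(\residual{C}{\theta}) \subseteq W$). The only variables shared by $\theta$ and $\pi$ are $W \cap \vars(\theta) = U$, the founded atoms that are true but unjustified in $\theta$; reducing $R$ by the justified part $J$ rather than by all of $\theta$ is precisely what re-exposes them in $S$. The extra unit clauses $\{u \mid u \in U\}$ of $D$ force $\pi$ to agree with $\theta$ on $U$, so $\pi$ satisfies $D$ iff $\sigma$ satisfies $C$ and $\theta,\pi$ are consistent.

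The crux is the founded least-model condition, and the lemma I would aim for is
\[
\Least(P^\sigma)\cap\Founded \;=\; (J^{+}\cap\Founded)\;\cup\;\bigl(\Least(Q^\pi)\cap\Founded\bigr),
\]
with every remaining founded variable false. Granting it, part~1 follows because a stable $\pi$ makes the second piece coincide with $\pi$ on $W\cap\Founded$ while $J$ pins down the first, yielding $\sigma =_{\Founded} \Least(P^\sigma)$; part~2 is the converse reading of the same identity. I expect this lemma to be the main obstacle and would prove it by two inclusions, using minimality and monotonicity of the least-fixpoint (immediate-consequence) operator: show that the right-hand side satisfies the positive program $P^\sigma$ and is contained in every model of it. The delicate points, which I would isolate as separate claims, are (a) founded variables occurring \emph{negatively} in rule bodies, whose reduct effect in $P^\sigma$ comes from $\pi$ and not merely from $J_0(\theta)$ --- these must be resolved consistently, which works because every founded variable of $\sigma$ draws its value from $J$, from $U$, or from $\pi$, never from $\theta_r$; and (b) the $U$ variables, whose unjustified truth is discarded in passing from $\theta$ to $J$ and which the unit clauses of $D$ force to be re-derived from genuine supporting rules inside $Q$, thereby upgrading ``true in $\theta$'' to ``justified in $Q$''. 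It is here that the definition of $\JA(P,\theta)$ through $\Least(\residual{R}{J_0(\theta)})$ does the real work, and I would use it to show the two fixpoint computations agree.
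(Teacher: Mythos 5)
Your proposal is correct and takes essentially the same route as the paper: your key lemma is precisely the paper's central equality $\Least(R^{\theta \cup \pi \cup \theta_r}) =_{\Founded} J \cup \Least(S^{\pi \cup \theta_r})$ (obtained there by splitting $R$ into $S$ and $R \setminus S$ and arguing the deleted rules are redundant given $J_0(\theta)$), and your preliminary steps --- that $\VV_r$ contains only standard variables, the residual treatment of the constraints, and the role of the unit clauses on $U$ in forcing $\pi$ to agree with $\theta$ --- match the paper's proof step for step. Your double-inclusion argument via the immediate-consequence operator is just a more explicit rendering of what the paper sketches for that equality.
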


\begin{corollary}
\label{cor:main}
\cormain
\end{corollary}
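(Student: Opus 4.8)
The plan is to derive the corollary from Theorem~\ref{th:main} by showing that a decomposition of $Q$ into variable-disjoint subprograms $Q_1, \ldots, Q_k$ lets us treat the product of their stable models exactly as the theorem treats a single stable model $\pi$ of $Q$. The key observation is that since the $W_i$ are pairwise disjoint and together with the constraints partition all of $Q$'s rules and constraints, the concatenation $\pi = \pi_1 \cup \ldots \cup \pi_k$ is a complete assignment over $W = W_1 \cup \ldots \cup W_k$, and I expect that $\pi$ is a stable model of $Q$ \emph{if and only if} each $\pi_i$ is a stable model of $Q_i$. Once that equivalence is established, both parts of the corollary follow by instantiating Theorem~\ref{th:main} with this combined $\pi$.

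\emph{First} I would prove the decomposition lemma for stable models: $\pi$ is a stable model of $Q$ iff every $\pi_i$ is a stable model of $Q_i$. The constraint side is immediate, since $D = D_1 \cup \ldots \cup D_k$ and each constraint lies entirely within one $W_i$, so $\pi$ satisfies all of $D$ iff each $\pi_i$ satisfies $D_i$. The founded side requires checking $\pi =_{W_{\cal F}} \Least(Q^\pi)$, where $W_{\cal F}$ denotes the founded variables of $Q$. Here I would use that the reduct and the least-model operator both respect the variable partition: because no rule in $S_i$ shares a founded head or body variable with any $S_j$ for $j \neq i$, the reduct $Q^\pi$ is the disjoint union of the reducts $Q_i^{\pi_i}$, and the least assignment of a disjoint union of positive rule sets is the union of the least assignments of the components. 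Hence $\Least(Q^\pi) = \bigcup_i \Least(Q_i^{\pi_i})$ restricted appropriately, and the global stability condition factors into the $k$ local conditions.

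\emph{Then} the two numbered claims fall out directly. For part~1, if each $\pi_i$ is a stable model of $Q_i$, the decomposition lemma gives that $\pi = \pi_1 \cup \ldots \cup \pi_k$ is a stable model of $Q$; applying Theorem~\ref{th:main}(1) with this $\pi$ yields that $\theta \cup \pi \cup \theta_r = \theta \cup \pi_1 \cup \ldots \cup \pi_k \cup \theta_r$ is a stable model of $P$ for any $\theta_r$. For part~2, if $\theta \cup \pi_1 \cup \ldots \cup \pi_k \cup \theta_r$ is a stable model of $P$, then Theorem~\ref{th:main}(2) gives that $\pi = \pi_1 \cup \ldots \cup \pi_k$ is a stable model of $Q$, and the decomposition lemma then forces each $\pi_i$ to be a stable model of $Q_i$. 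I would also note that the theorem's standing hypothesis---that any founded variable with no rule in $S$ is false in $\theta$---transfers unchanged, since $S = S_1 \cup \ldots \cup S_k$ means a founded variable lacking a rule in $S$ lacks one in every $S_i$ as well.

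The main obstacle I anticipate is rigorously justifying that $\Least$ distributes over the variable-disjoint union of positive rule sets, i.e.\ that $\Least(S_1 \cup \ldots \cup S_k) = \Least(S_1) \cup \ldots \cup \Least(S_k)$ on the founded variables. This is intuitively clear because a minimal satisfying assignment cannot be forced to set a variable in $W_i$ true on account of a rule in $S_j$ when the two share no variables, but making it airtight requires arguing that the fixpoint/minimality computation does not interact across components---essentially that the least fixed point of the combined immediate-consequence operator is the union of the componentwise least fixed points. Everything else, particularly the constraint bookkeeping and the reduction to Theorem~\ref{th:main}, should be routine once this distributivity is nailed down.
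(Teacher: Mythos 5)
Your proposal is correct and follows essentially the same route as the paper: reduce to Theorem~\ref{th:main} by joining the $\pi_i$ into a single complete assignment $\pi$ over $W$, via the lemma that stable models of a variable-disjoint union of programs are exactly the unions of stable models of the components. The only difference is that the paper obtains this lemma by citing the Module Theorem of \cite{modularity_asp} (as a special case of their Corollary~1 with fully disjoint vocabularies), whereas you propose to prove it directly through the distributivity of $\Least$ over variable-disjoint unions of positive rule sets --- a routine argument that is indeed sound here, since the reduct is computed rule-by-rule and a rule in $S_j$ can never force a variable of $W_i$ in a minimal model when the components share no variables.
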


The first part of Theorem \ref{th:main} shows that we can solve
the justified residual program independently (as well as cache the result) and extend any of its stable model to a full stable model by assigning
any value to the remaining variables of the original program. 
The second part of the theorem establishes that any full stable model of the original program is counted since it is
an extension of the stable model of the residual program.
The corollary tells us that if the justified residual program decomposes into disjoint programs, then we can
solve each one of them independently, and multiply their counts to get the count for justified residual program.

\begin{example}
In Example \ref{ex:residual},
the justified residual program has only two stable models:
$\pi_1 = \{s, a, b, \neg t\}$ and $\pi_2 = \{t, a, b, \neg s\}$. It can be verified
that the only stable assignments extending $\theta$ of $P$ are $\theta \cup \pi_1 \cup \theta_{xyz}$ and
$\theta \cup \pi_2 \cup \theta_{xyz}$ where $\theta_{xyz}$ is any assignment on the standard 
variables $x,y,z$. Therefore, the total number of stable models below $\theta$ is $2 \times 2^{|\{x,y,z\}|} = 16$.

Now say we have another assignment $\theta' = \{a,b,c,d,u,e,\neg f, x\}$. It can be seen that it produces
the same justified residual program as that produced by $\theta$ for which we know the stable model count
is $2$. Furthermore, the set of remaining variables is $\{y,z\}$. Therefore, the number of stable
assignments below $\theta'$ is $2 \times 2^{|\{y,z\}|} = 8$.
\end{example}

In order to convert a model counter to a stable model counter, we can either modify its calculation of the residual program
as suggested by Theorem \ref{th:main}, or, we can modify the actual program and use its \emph{existing} calculation
in a way that residual of the modified program correctly models the \emph{justified} residual program.
%
Let us describe one such approach and prove that it is correct.  We post a copy of each founded
variable and each rule such that the copy variable only becomes true
when the corresponding founded variable is justified.  More formally,
for each founded variable $v$, we create a standard variable $v'$, add the
constraint $\neg v' \vee v$, and for each rule $a \leftarrow f_1 \wedge
\ldots f_n \wedge sn_1 \wedge \ldots \wedge sn_m$ where each $f_i$ is a
positive founded literal and each $sn_i$ is a standard or negative literal,
we add the clause $a' \vee \neg f_1' \vee \ldots \neg f_n' \vee \neg sn_1
\vee \ldots \vee \neg sn_m$.  Most importantly, we do not allow search to
take decisions on any of these introduced copy variables. Let this transformation
of a program $P$ be denoted by $\shadow(P)$.

We now show that it is correct to use the above approach for stable model counting.
\newcommand{\projectResultsSetup}{
For the following discussion and results, let $P=(\VV,R,C)$ be an \aspsat program,
$\shadow(P) = (W, R, D)$ be denoted by $Q$. Let $\pi$, $\pi_1$, $\pi_2$ be assignments over $W$ and $\theta$, $\theta_1$, $\theta_2$ be their projections
over non-copy variables ($\VV$). 
Let $\residual{Q}{\pi}$ (similarly for $\pi_1$, $\pi_2$) be a shorthand for $(\vars(\residual{R}{\theta}) \cup \vars(\residual{D}{\theta}),\residual{R}{\theta},\residual{D}{\theta})$. 
The results assume that assignments $\pi$, $\pi_1$, $\pi_2$ are closed
under unit propagation and unfounded set propagation, i.e., both propagators have been run until
fixpoint in the solver.

To prove the results, we define a function $\project$ that takes
the copy program $Q$ and $\pi$ and maps it to the justified residual program w.r.t. to the projection
of that $\pi$ on non-copy variables and then argue that $\residual{Q}{\pi}$ correctly models the justified residual program.
Formally, $\project(Q,\pi)$ is an \aspsat program $P' = (\VV', R', C')$ constructed as follows.
Add every constraint in $\residual{D}{\pi}$ that does not have a copy variable in $C'$.
For every constraint $v' \vee \neg f_1' \vee \ldots \neg f_n' \vee \neg sn_1
\vee \ldots \vee \neg sn_m$ in $\residual{D}{\pi}$, add the rule $v \leftarrow f_1 \wedge \ldots \wedge f_n \wedge sn_1 \wedge \ldots \wedge sn_m$ in $R'$.  
Let $U$ be the set of founded variables $v$ such that $v$ is true but $v'$ is unfixed in $\pi$. 
For every $v$ in $U$, add the constraint $v$ in $C'$. Define $\VV'$ as variables of $R'$ and $C'$.
}
\projectResultsSetup
Proposition \ref{prop:copyunsat}
proves that we cannot miss any stable model of the original program if we use the copy approach.

\newcommand{\propcopyunsat}{
If $\pi$ cannot be extended to any stable model of $Q$, then $\theta$ cannot be extended to any stable model of $P$.
}
\newcommand{\thprojection}{
$\justres{P}{\theta} = \project(Q,\pi)$.
}
\newcommand{\corcopycube}{
If $\residual{Q}{\pi}$ has no rules or constraints and there are $k$ unfixed variables, then $\theta$ is a stable model cube of $\justres{P}{\theta}$ of size $2^k$.
}
\newcommand{\corcopycache}{
If $\residual{Q}{\pi_1} = \residual{Q}{\pi_2}$, then $\justres{P}{\theta_1} = \justres{P}{\theta_2}$.
}
\newcommand{\corcopydecomp}{
If $\residual{Q}{\pi}$ decomposes into $k$ disjoint components $Q_1, \ldots, Q_k$, then
$\justres{P}{\theta}$ decomposes into $k$ disjoint components $P_1, \ldots, P_k$ such that
$P_i = \project(Q_i, \pi_i)$ where $\pi_i$ is projection of $\pi$ on $\vars(Q_i)$.
}

\begin{proposition}
\label{prop:copyunsat}
\propcopyunsat
\end{proposition}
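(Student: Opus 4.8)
The plan is to prove the contrapositive: assuming $\theta$ extends to some stable model $M$ of $P$ (so $\theta \subseteq M$), I will construct a stable model $N$ of $Q = \shadow(P)$ with $\pi \subseteq N$. Since the rules of $Q$ are exactly the rules $R$ of $P$ and the copy variables are standard, the reduct $Q^N$ depends only on $N|_{\VV}$; hence if I set $N|_{\VV} = M$ then $\Least(Q^N) = \Least(P^M) =_{\Founded} M =_{\Founded} N$, so the founded-stability condition for $N$ holds automatically. The original constraints $C \subseteq D$ are satisfied because $M$ satisfies them. It therefore remains only to choose the copy literals so that the copy constraints $\neg v' \vee v$ and the copy clauses $a' \vee \neg f_1' \vee \ldots \vee \neg f_n' \vee \neg sn_1 \vee \ldots \vee \neg sn_m$ hold, and so that $\pi \subseteq N$.

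The natural (and in fact forced) choice is $v' \in N \iff v \in M$ for each founded $v$. First I would check this yields a stable model of $Q$: the constraint $\neg v' \vee v$ holds since $v' \in N$ implies $v \in M$, and each copy clause, read under $v' = v$, asserts that whenever a rule body is satisfied in $M$ its head is true in $M$, which holds because $M$ satisfies every rule of $P$. These values are moreover the only ones compatible with $D$: the copy clauses force $v'$ true exactly on the least model of the reduct, which equals the founded-true part of $M$, while $\neg v' \vee v$ forces $v'$ false elsewhere.

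The crux, and the only real work, is to show $\pi \subseteq N$, i.e.\ that $\pi$ never commits a copy literal that $N$ contradicts. Agreement on non-copy variables is immediate from $\pi|_{\VV} = \theta \subseteq M = N|_{\VV}$. For a positive copy literal $v' \in \pi$, closure under unit propagation on $\neg v' \vee v$ forces $v \in \pi \subseteq M$, so $v' \in N$. The delicate case is a negative copy literal $\neg v' \in \pi$, where I must prove $v \notin M$, i.e.\ that there is no ``bad'' variable whose copy is false in $\pi$ yet whose value is true in $M$. Since copy variables are never branched on, every such $\neg v'$ arises from unit propagation, so I would induct on the step at which $v'$ is falsified. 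If $v'$ was falsified through $\neg v' \vee v$, then $v$ is false in $\pi$, hence $v \notin M$. Otherwise $v'$ was falsified as a body literal $f_j' = v'$ of a copy clause $a' \vee \neg f_1' \vee \ldots \vee \neg sn_m$, which requires $\neg a' \in \pi$, the remaining $f_i' \in \pi$, and every $sn_k$ satisfied by $\pi$ at that step. Unit propagation on $\neg f_i' \vee f_i$ then places the other $f_i$ in $M$, the $sn_k$ hold in $M$ since $\pi|_{\VV} \subseteq M$, and the induction hypothesis applied to the strictly earlier literal $\neg a'$ gives $a \notin M$. If $v \in M$ held, the entire body of the rule $a \leftarrow f_1 \wedge \ldots \wedge f_n \wedge sn_1 \wedge \ldots \wedge sn_m$ would be satisfied in $M$ while its head $a$ is false, contradicting that the stable model $M$ satisfies every rule; hence $v \notin M$.

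I expect the main obstacle to be precisely this induction: arranging the propagation order so that the literal $\neg a'$ used to falsify $v'$ is strictly earlier and thus covered by the hypothesis, and confirming that only unit propagation, not unfounded-set propagation, is needed for this direction. The unfounded-set propagator is what guards against over-counting and will be invoked in the companion results, whereas \emph{this} proposition needs only the fact that we never falsify the copy of a genuinely justified atom, which the rule-satisfaction argument above supplies.
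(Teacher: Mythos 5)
Your proof is correct and takes essentially the same route as the paper's one-line proof sketch: argue the contrapositive by extending a stable model $M \supseteq \theta$ of $P$ to a model of $Q$ in which each copy variable $v'$ takes the value of $v$ --- exactly the assignment that running unit propagation on $Q$ and $M$ produces --- and then checking that this model extends $\pi$. Your induction on the propagation trail, ruling out a conflict with any negative copy literal $\neg v' \in \pi$, is precisely the detail that the paper's sketch asserts but leaves implicit.
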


Theorem \ref{th:projection} establishes that we can safely use $\residual{Q}{\pi}$ to emulate
the justified residual program $\justres{P}{\theta}$. 
Corollary \ref{cor:copycube} says that if we detect
a stable model cube of $\residual{Q}{\pi}$, then we also detect a stable model cube of the same size for the justified residual program. 
This corollary and Proposition \ref{prop:copyunsat} prove that the stable model count of the actual program is preserved.

\begin{theorem}
\label{th:projection}
\thprojection
\end{theorem}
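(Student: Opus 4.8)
The plan is to prove the set equality $\justres{P}{\theta} = \project(Q,\pi)$ by unfolding both sides' definitions and matching them component-by-component: the rules $S = \residual{R}{J}$, the constraints $D$, and the variable set $W$. The central conceptual task is to show that the justification structure captured by $\JA(P,\theta)$ on the left coincides exactly with the information recorded by the copy variables $v'$ on the right. The key bridge I would establish first is the following claim: for every founded variable $v$, the copy $v'$ is true in $\pi$ (equivalently, $v'$ is not unfixed, and given the constraint $\neg v' \vee v$, it is true) \emph{if and only if} $v \in \JA(P,\theta)$. Intuitively, $v'$ becomes true precisely when $v$ can be \emph{justified} — derived from the rules together with the standard and negative literals already in $\theta$ — which is exactly the condition $v \in \Least(\residual{R}{J_0(\theta)})$ defining membership in $\JA(P,\theta)$. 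Because $\pi$ is assumed closed under unit propagation and unfounded set propagation, the copy clauses $a' \vee \neg f_1' \vee \ldots \vee \neg sn_m$ fire to set $a'$ true exactly along the least-fixpoint derivation that computes $\Least$, so the copy propagation simulates the $\Least$ computation faithfully. This correspondence is what I expect to be the main obstacle, since it requires carefully arguing that the fixpoint reached by the two propagators on the copy clauses agrees with the least-model construction, in both directions.

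\textbf{Establishing the bridge claim.} To prove the forward direction ($v' \text{ true} \Rightarrow v \in \JA(P,\theta)$), I would induct on the derivation length of $v'$ under unit propagation of the copy clauses, showing that each firing of a copy clause corresponds to an application of a rule in $\residual{R}{J_0(\theta)}$ whose body is already in the least assignment. For the converse ($v \in \JA(P,\theta) \Rightarrow v' \text{ true}$), I would induct on the least-fixpoint iteration computing $\Least(\residual{R}{J_0(\theta)})$: whenever $v$ enters the least model via a rule all of whose founded-body literals are already justified, the matching copy clause has all its $\neg f_i'$ literals falsified (since those copies are true), forcing $a'$ true by unit propagation. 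The role of unfounded set propagation is to guarantee that copies of variables lying on unsupported positive loops — such as $a,b$ in Example \ref{ex:incorrect} — are \emph{not} made true, matching their exclusion from $\JA$.

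\textbf{Assembling the three components.} Once the bridge claim holds, the three pieces follow. First, the set $U$ of founded variables with $v$ true but $v'$ unfixed in $\pi$ is, by the claim, exactly $\vars(\theta) \setminus \vars(J)$ restricted to positive founded literals, so the constraints $\{u \mid u \in U\}$ added by $\project$ coincide with the unit constraints $\{u \mid u \in U\}$ in Definition \ref{def:justres}. Second, the non-copy constraints of $\residual{D}{\pi}$ are exactly $\residual{C}{\theta}$, since $D = C$ augmented only by copy clauses, and $\pi$ and $\theta$ agree on non-copy variables. Third, I must show the rules match: $\project$ reconstructs a rule $v \leftarrow f_1 \wedge \ldots \wedge sn_m$ from each surviving copy clause in $\residual{D}{\pi}$, and I would argue a copy clause survives residualization under $\pi$ precisely when the corresponding rule survives in $\residual{R}{J}$ — a copy clause is removed (satisfied) exactly when its head copy $a'$ is true, i.e. when $a \in \JA(P,\theta)$, which is exactly when rule $r$ is dropped from $\residual{R}{J}$ because its head is already in the least assignment $J$. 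The variable sets $W = \VV' = \vars(S) \cup \vars(D)$ then agree by construction. Closing these matches in both directions gives the claimed program equality.
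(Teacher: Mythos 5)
Your proposal is correct and takes essentially the same route as the paper's own proof: your bridge claim (for founded $v$, the copy $v'$ is true in $\pi$ iff $v \in \JA(P,\theta)$, proved by induction in both directions) is precisely the paper's core point that unit propagation on the residual copy clauses simulates the computation of $\Least(\residual{R}{J_0(\theta)})$, so the derived copies correspond exactly to $\JA(P,\theta) \setminus J_0(\theta)$, and your subsequent component-wise matching of rules, non-copy constraints, the set $U$, and the variable sets is the same assembly the paper performs. Two minor slips worth noting: a copy clause in $\residual{D}{\pi}$ is satisfied (hence dropped) not only when its head copy $a'$ is true but also when some $\neg f_i'$ or $\neg sn_i$ is true, though since $r$ and $r'$ share the standard and negative literals and $J \supseteq \theta^-$ these removals and the literal-level simplifications correspond exactly as the paper observes, so your survival correspondence still goes through; and the exclusion of unsupported-loop atoms from the justified copies follows from the fact that only copy clauses can derive positive copy literals and no decisions are taken on copies (as the paper states), rather than from unfounded set propagation itself.
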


\begin{corollary}
\label{cor:copycube}
\corcopycube
\end{corollary}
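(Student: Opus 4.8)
The plan is to reduce the statement to a structural analysis of the justified residual program through Theorem~\ref{th:projection}, and then to show that under the hypothesis this program is trivial. By Theorem~\ref{th:projection} we have $\justres{P}{\theta} = \project(Q,\pi)$, so it suffices to prove that $\project(Q,\pi)$ contains no rules and no constraints, and that the $k$ unfixed variables freely extend $\theta$ to stable models. Recall from the definition of $\project$ that the rules of $\project(Q,\pi)$ are generated only from the copy clauses surviving in the residual constraints of $\residual{Q}{\pi}$, and its non-copy constraints only from the non-copy constraints surviving there; since by hypothesis $\residual{Q}{\pi}$ has neither rules nor constraints, both sources are empty. The only remaining possible constraints of $\project(Q,\pi)$ are the unit constraints contributed by the set $U$ of founded variables that are true but whose copy is unfixed. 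Hence the whole proof comes down to showing $U=\emptyset$.

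First I would establish an auxiliary fact: $\pi$ leaves no founded variable unfixed. Let $X$ be the set of unfixed founded variables and suppose $X\neq\emptyset$. Because $\residual{Q}{\pi}$ contains no rules, every rule $v \leftarrow b_1 \wedge \ldots \wedge \neg c_m$ with $v\in X$ is already satisfied as a clause; since $v$ is unfixed, this forces some body literal to be false. Thus every rule of every $v\in X$ has a false body literal, so $X$ is an unfounded set all of whose atoms are unfixed. Closure of $\pi$ under unfounded set propagation would then drive every atom of $X$ to false, contradicting their being unfixed. Hence $X=\emptyset$, so every founded variable is true or false in $\pi$; the same bookkeeping shows every copy variable is fixed, since the copy of a false founded variable is forced false by its definition clause $\neg v' \vee v$, and (by the next step) the copy of a true founded variable is true.

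The heart of the argument is $U=\emptyset$. Consider the larger set $T=\{w\in\Founded \mid w \text{ is true and } w' \text{ is not true in } \pi\}$, so that $U\subseteq T$, and I claim $T$ is unfounded. Take $w\in T$ and any rule $w \leftarrow f_1 \wedge \ldots \wedge f_n \wedge sn_1 \wedge \ldots \wedge sn_m$, where the $f_i$ are positive founded body literals and the $sn_j$ are standard or negative. Its copy clause $w' \vee \neg f_1' \vee \ldots \vee \neg f_n' \vee \neg sn_1 \vee \ldots \vee \neg sn_m$ lies in the constraints of $Q$ and, since $\residual{Q}{\pi}$ has no constraints, is satisfied by $\pi$; as $w'$ is not true, satisfaction must come from some $sn_j$ being false or some $f_i'$ being false. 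If some $sn_j$ is false, the rule has a false body literal. If some $f_i'$ is false then, because every founded variable is fixed by the auxiliary fact, $f_i$ is either false (again a false body literal) or true, in which case $f_i$ is true with $f_i'$ not true, so $f_i\in T$ supplies a positive founded body literal inside $T$. In every case the unfounded-set condition holds, so $T$ is unfounded; its atoms are all true, so closure under unfounded set propagation forces $T=\emptyset$, whence $U=\emptyset$.

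With $U=\emptyset$ the program $\project(Q,\pi)=\justres{P}{\theta}$ has no rules and no constraints, so its justified assignment already determines every founded variable. By the auxiliary fact the $k$ unfixed variables are all standard and occur in no surviving rule or constraint; applying Theorem~\ref{th:main} to this empty residual (with the empty assignment as its unique stable model) certifies that each of the $2^k$ assignments over these remaining variables extends $\theta$ to a distinct stable model of $P$. Therefore $\theta$ is a stable model cube of $\justres{P}{\theta}$ of size $2^k$. I expect the case analysis showing $T$ is unfounded to be the main obstacle: it is the only place where the semantics of the copy clauses and the unfounded-set propagation invariant are combined, and it crucially relies on the auxiliary fact to exclude unfixed founded body atoms, which is precisely the gap that would make a naive argument fail.
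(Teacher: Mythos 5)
Your proof is correct and takes essentially the same route as the paper's: emptiness of $\residual{Q}{\pi}$ forces $\project(Q,\pi)=\justres{P}{\theta}$ to be empty via Theorem~\ref{th:projection}, and fixedness of all founded and copy variables makes the $k$ unfixed variables standard, giving the cube size $2^k$. The only difference is level of detail: where the paper merely asserts ``we can show that all founded variables and their copies must be fixed,'' you actually prove it (and the crucial $U=\emptyset$) by exhibiting the unfounded sets $X$ and $T$ and invoking the fixpoint invariant of unfounded-set propagation, which is a faithful filling-in of the step the paper leaves implicit rather than a different argument.
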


The next two corollaries prove that the copy approach can be used for caching dynamic decomposition
respectively.

\begin{corollary}
\label{cor:copycache}
\corcopycache
\end{corollary}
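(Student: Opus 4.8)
The plan is to derive the statement directly from Theorem~\ref{th:projection}, which already gives us $\justres{P}{\theta_1} = \project(Q,\pi_1)$ and $\justres{P}{\theta_2} = \project(Q,\pi_2)$. Given these two identities, the corollary reduces to the single implication that $\residual{Q}{\pi_1} = \residual{Q}{\pi_2}$ forces $\project(Q,\pi_1) = \project(Q,\pi_2)$, so the real content is that $\project(Q,\pi)$ is a \emph{function of $\residual{Q}{\pi}$ alone}: it must never consult information about $\pi$ that is not already recorded in the residual.

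Next I would inspect the definition of $\project$ and observe that it reads off only two things from $\pi$: the residual constraint set $\residual{D}{\pi}$, from which it copies the copy-free constraints into $C'$ and reconstructs the rules of $R'$ from the clauses of the form $v' \vee \neg f_1' \vee \ldots \vee \neg sn_m$; and the auxiliary set $U$ of founded variables that are true in $\pi$ but whose copy $v'$ is still unfixed, which contributes the unit constraints $v$ to $C'$. The first of these is exactly the constraint component of $\residual{Q}{\pi}$, so it coincides for $\pi_1$ and $\pi_2$ by hypothesis, and with it the reconstructed $R'$ and the copy-free part of $C'$. The entire burden therefore falls on showing that $U$ is itself determined by $\residual{Q}{\pi}$; once that is established, $R'$, $C'$, and hence $\VV'$ agree componentwise, giving $\project(Q,\pi_1) = \project(Q,\pi_2)$ and, through Theorem~\ref{th:projection}, the corollary.

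The hard part will be recovering $U$ from the residual, since $U$ is phrased in terms of the latent copy variables rather than in terms of the residual itself. Here I would lean on the standing assumption that $\pi$ is closed under unit and unfounded set propagation: a true founded variable whose justification is already witnessed would have had its copy $v'$ propagated to true and so would be absent from $U$, whereas a true-but-as-yet-unjustified founded variable keeps $v'$ unfixed and leaves its defining copy clauses $v' \vee \ldots$ present in $\residual{D}{\pi}$ with $v'$ still occurring. Thus $v \in U$ exactly when $v$ has been simplified out of the residual as true while $v'$ persists in the residual clauses, a condition read entirely off $\residual{Q}{\pi}$. Verifying this equivalence carefully --- in particular that the fixpoint hypothesis really does force the copy of every justified true variable, so that $U$ cannot silently differ between $\pi_1$ and $\pi_2$ while their residuals agree --- is where the genuine work lies; everything else is bookkeeping that follows from the definitions and Theorem~\ref{th:projection}.
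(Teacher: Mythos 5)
Your proposal is correct and takes essentially the same route as the paper, whose entire proof of this corollary is the one-liner that it ``follows directly from Theorem~\ref{th:projection} and definition of $\project$ function'' --- i.e., exactly your reduction to showing $\project(Q,\pi)$ is a function of $\residual{Q}{\pi}$ alone. You actually supply more detail than the paper does: the recovery of $U$ from the residual (a true founded $v$ with $v'$ unfixed must, by closure under unit and unfounded-set propagation, leave some copy clause containing $v'$ in the residual while $v$ itself is simplified away) is precisely the step the paper leaves implicit, and your sketch of it is sound.
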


\begin{corollary}
\label{cor:copydecomp}
\corcopydecomp
\end{corollary}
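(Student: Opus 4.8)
The plan is to make Theorem \ref{th:projection} do the heavy lifting and reduce the corollary to a purely structural property of the map $\project$. By that theorem we already know $\justres{P}{\theta} = \project(Q,\pi)$, so it suffices to show that whenever $\residual{Q}{\pi}$ splits into disjoint components $Q_1,\ldots,Q_k$, the program $\project(Q,\pi)$ splits into the matching components $\project(Q_i,\pi_i)$. The first fact I would record is that $\project$ is \emph{syntax-local}: every element it emits comes from a single piece of its input. A non-copy constraint of $\residual{D}{\pi}$ is copied verbatim into $C'$; a single copy clause $v' \vee \neg f_1' \vee \ldots \vee \neg f_n' \vee \neg sn_1 \vee \ldots \vee \neg sn_m$ becomes exactly one rule $v \leftarrow f_1 \wedge \ldots \wedge f_n \wedge sn_1 \wedge \ldots \wedge sn_m$ of $R'$; and each $v \in U$ contributes the single unary constraint $v$. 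Hence, once we know how the clauses and the set $U$ distribute across the components, the output distributes identically.

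Next I would check that the input distributes cleanly. The clauses of $\residual{Q}{\pi}$ are partitioned by the hypothesised decomposition, so each non-copy constraint and each copy clause lies in a unique $Q_i$ and generates its rule or constraint inside $\project(Q_i,\pi_i)$. For the set $U$, observe that $\pi_i$ is the restriction of $\pi$ to $\vars(Q_i)=W_i$, so a founded variable $v$ has $v'$ unfixed in $\pi_i$ exactly when $v' \in W_i$; thus each $v\in U$ is assigned to the one component whose $W_i$ contains its copy $v'$, and the unary constraint $v$ lands in precisely that $\project(Q_i,\pi_i)$. Combined with syntax-locality, this shows the rules and constraints of $\project(Q,\pi)$ are exactly the union of those of $\project(Q_1,\pi_1),\ldots,\project(Q_k,\pi_k)$.

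The remaining, and main, obstacle is to verify that these components have \emph{pairwise disjoint variable sets}, since the projection $v' \mapsto v$ could in principle merge a variable placed in $Q_i$ with one placed in $Q_j$. As the $W_i$ are already disjoint, the only dangerous configuration is $v \in W_i$ with $v' \in W_j$ for $i\neq j$, which after projection would place $v$ in both $P_i$ and $P_j$. I would rule this out by a short case analysis on $(v,v')$ that leans on the linking constraint $\neg v' \vee v$ and on closure of $\pi$ under unit propagation. If both $v$ and $v'$ are unfixed, then $\neg v' \vee v$ is unsatisfied, survives in $\residual{Q}{\pi}$, and so forces $v$ and $v'$ into the \emph{same} component, contradicting $i\neq j$. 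If $v'$ is true, unit propagation forces $v$ true, so neither occurs as a residual variable (and $v\notin U$ because $v'$ is fixed); if $v'$ is false it is simply absent from the residual. The last subtle case is the $U$ configuration ($v$ true, $v'$ unfixed): here $v$ is fixed, hence never a residual variable, so it enters $\project$ only through the unary $U$-constraint, which we already localised to the component of $v'$. In every case $v$ and the projection of $v'$ are confined to one component, no merging occurs, the variable sets stay disjoint, and the claimed decomposition follows.
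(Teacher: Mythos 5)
Your proof is correct and takes essentially the same route as the paper's: it reduces the claim via Theorem~\ref{th:projection} to the syntax-locality of $\project$, maps each component of $\residual{Q}{\pi}$ clause-by-clause to its counterpart in $\justres{P}{\theta}$, and attaches each unary $U$-constraint to the component containing $v'$, exactly where the paper localizes it via the projected rule with head $v$. Your explicit case analysis on $(v,v')$ using the linking constraint $\neg v' \vee v$ and closure under propagation merely spells out the disjointness-preservation step that the paper compresses into its remark that residual rules are variable-wise subsumed by their copy constraints (so the components are determined by the constraints alone); this is a welcome, slightly more careful rendering of the same argument rather than a different one.
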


\section{\Problog via Stable Model Counting}

In this section, we describe how we apply stable model counting in the probabilistic
logic programming solver \Problog~\cite{problog_journal}.
A probabilistic logic program is a collection of mutually independent \emph{random}
variables each of which is annotated with a probability, \emph{derived} variables, \emph{evidence} constraints 
and rules for the derived variables. The \emph{distribution semantics} \cite{distribution_semantics} says that for a given
assignment over the random variables, the values of the derived variables is given by the well-founded model.
Furthermore, the \emph{weight} of that world is equal to the product of probabilities of values
of the random variables. In our setting, it is useful to think of random variables, derived variables,
evidence constraints, and rules as standard variables, founded variables, constraints and rules respectively.
\Problog handles various inference tasks, but the focus of this paper is computing the marginal probability
of query atoms given evidence constraints. The probability of a query atom is equal to the sum of
weights of worlds where a query atom and evidence are satisfied divided by the sum of weights of worlds 
where the evidence is satisfied.

\begin{figure}[t]
  \centering
  \includegraphics[scale=0.1]{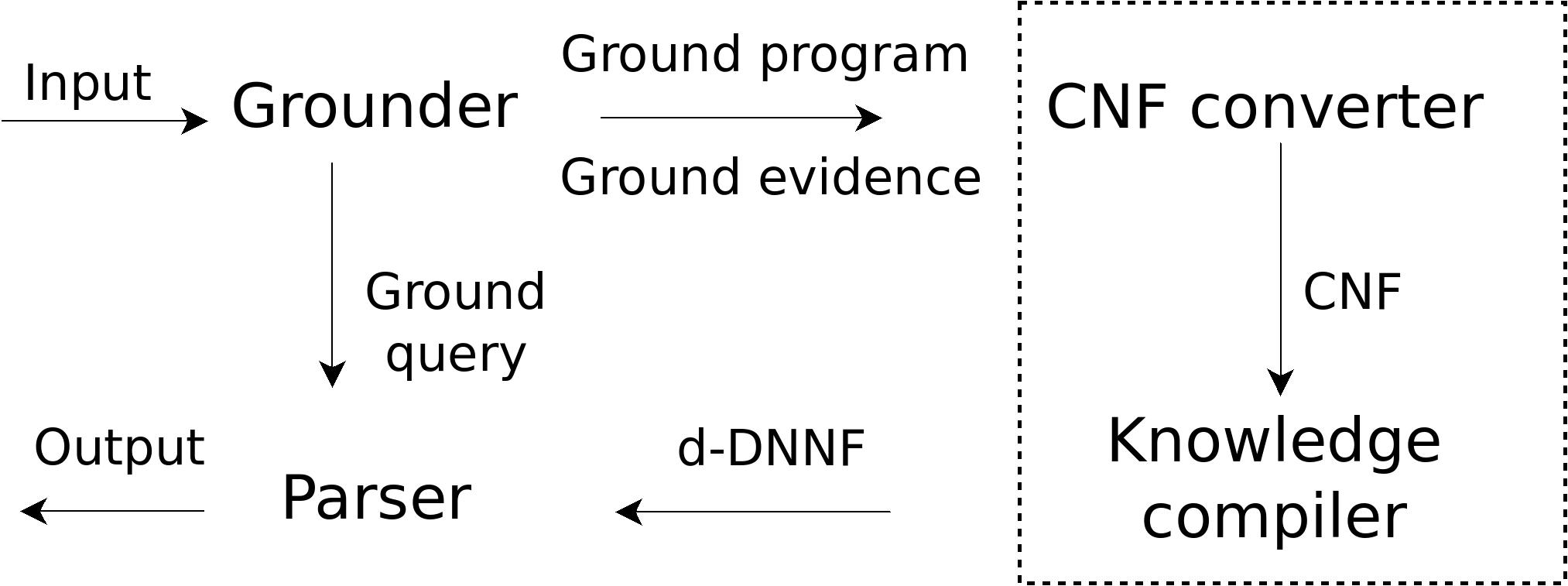}
  \caption {Execution of \Problog}
	\label{figure:problog}
\end{figure}

Figure \ref{figure:problog} shows the execution of a \Problog program. The input is a non-ground probabilistic
logic program which is given to the grounder that cleverly instantiates only parts of the program that are relevant
to the query atoms, similar to how \emph{magic set transformation} \cite{original_magic} achieves the same goal in logic programming.
The ground program and the evidence is then converted to CNF using the proof based encoding that we discussed earlier.
This CNF is passed on to a \emph{knowledge compiler} like \dsharp \cite{dsharp}. \dsharp is an extension of \sharpsat \cite{sharpsat}
where the DPLL-style search is recorded as \ddnnf.
The \ddnnf produced by the knowledge compiler is given to the parser of \Problog
along with the ground queries and probabilities of the random variables. The parser evaluates the probability of each
query by crawling the \ddnnf as described in \cite{problog_journal}.

Our contribution is in the components in the dotted box in Figure \ref{figure:problog}. We have implemented stable model counting
by extending the propositional model counter \sharpsat as described in the previous section.
Since \sharpsat is part of the knowledge compiler \dsharp, our extension of \sharpsat automatically extends \dsharp to a
stable model knowledge compiler.
The CNF conversion component in \Problog chain is replaced by a simple processing of the ground program and evidence
to our desired input format. In the first approach where the search is restricted to standard variables, the evidence needs
to be passed on to our stable model counter which posts a nogood (the current assignment of standard
variables) each time an evidence atom is violated. In approach given in Section~\ref{sec:modify}, however, we post each evidence as a unit
clause, much like \Problog does in its CNF conversion step. 
Including evidence in constraints in the second approach is safe since our residual program relies on the justified assignment only,
and propagation on founded literals that makes them true due to constraints does not change that.
Outside the dotted box in the figure, the rest of the \Problog logic remains the same.

\section{Experiments}

We compare the two approaches based on implementation of unfounded set detection as explained in Section \ref{sec:sm_counting} 
against the proof based encoding of \Problog. 
We use two well-studied benchmarks: \SmokersFriends \cite{problog}
problem and the graph reliability problem (\GraphReliability) \cite{graph_reliability} with evidence constraints.

In both problems, the graph is probabilistic. 
In \GraphReliability, the nodes are associated with probabilities while in
\SmokersFriends, the edges have probabilities. Naturally, for $n$ nodes, the number of random variables is in $O(n)$ and $O(n^2)$
for \GraphReliability~ and \SmokersFriends~ respectively.
Due to this, \GraphReliability~ has significantly more loops per random
variables in the dependency graph which makes it more susceptible to the
size problems of eager encoding.
We refer to the fixed search approach of Section~\ref{sec:fixed}
 as \ASProblogS and the proper 
integration of unfounded set detection through the
use of copy variables of Section~\ref{sec:modify} as \ASProblog.
All experiments were run on a machine running Ubuntu 12.04.1 LTS
with 8 GB of physical memory and Intel(R) Core(TM) i7-2600 3.4 GHz processor.

\newcommand{\best}[1]{\textbf{#1}}

\begin{table*}[t] \scriptsize
\setlength{\tabcolsep}{2pt}
\centering
\begin{tabular}{rc|rrrrrr|rrrrrrr|rrrrrrr}
\hline
\multicolumn{2}{c|}{Instance} & \multicolumn{6}{c|}{\Problog} & \multicolumn{7}{c|}{\ASProblog} & \multicolumn{7}{c}{\ASProblogS} \\
$N$ & $P$  & Time   & $V$    &  $C$     & $D$  & $A$  &  $S$  & Time   & $V$    &  $C$     & $D$  & $A$  &  $L$     & $S$ & Time   & $V$    &  $C$     & $D$  & $A$  &  $L$     & $S$ \\ [0.2ex]
\hline
10 & 0.5 & 11.33 & 2214 & 7065 & \best{199} & \best{7.68} & 1.21 & \best{1.08} & 72 & 226 & 233 & 8.88 & \best{13} & .\best{057} & 1.13 & \best{60} & \best{171} & 333 & 8.75 & 124 & .10 \\ [0.5ex]
11 & 0.5 & 115.75 & 6601 & 21899 & \best{353} & \best{8.61} & 7.62 & \best{1.11} & 86 & 283 & 382 & 9.76 & \best{23} & \best{.10} & 1.12 & \best{73} & \best{216} & 354 & 9.38 & 107 & .10 \\ [0.5ex]
12 & 0.5 & --- & 16210 & 55244 & --- & --- & --- & \best{1.20} & 101 & 348 & \best{675} & 10.81 & \best{21} & \best{.19} & 1.32 & \best{87} & \best{267} & 904 & \best{10.47} & 405 & .28 \\ [0.5ex]
13 & 0.5 & --- & 59266 & 204293 & --- & --- & --- & \best{1.41} & 117 & 414
& \best{1395} & 12.16 & \best{44} & \best{.41} & 2.61 & \best{102} &
\best{320} & 2737 & \best{11.33} & 1272 & 1.28 \\ [0.5ex]
\hline
15 & 0.5 & --- & --- & --- & --- & --- & --- & \best{2.05} & 142 & 514 & \best{3705} & 13.42 & \best{59} & \best{1.23} & 4.78 & \best{125} & \best{398} & 7542 & \best{12.88} & 2028 & 2.71 \\ [0.5ex]
20 & 0.5 & --- & --- & --- & --- & --- & --- & \best{31.82} & 246 & 966 & \best{83091} & 18.37 & \best{189} & \best{38.11} & 82.21 & \best{224} & \best{757} & 143188 & \best{18.31} & 32945 & 62.02 \\ [0.5ex]
25 & 0.25 & --- & --- & --- & --- & --- & --- & \best{22.44} & 225 & 800 &
\best{62871} & \best{18.70} & \best{231} & \best{27.23} & 53.63 & \best{198}
& \best{620} & 128534 & 19.55 & 41811 & 43.06 \\ [0.5ex]
\hline
30 & 0.1 & --- & --- & --- & --- & --- & --- & \best{3.71} & 168 & 468 & \best{7347} & \best{15.89} & \best{129} & \best{2.99} & 13.22 & \best{137} & \best{351} & 43968 & 19.31 & 2833 & 10.40 \\ [0.5ex]
31 & 0.1 & --- & 37992 & 115934 & --- & --- & --- & \best{2.84} & 171 & 473 & \best{5054} & \best{15.06} & \best{52} & \best{2.23} & 12.67 & \best{140} & \best{356} & 19585 & 17.53 & 1293 & 11.18 \\ [0.5ex]
32 & 0.1 & --- & --- & --- & --- & --- & --- & \best{7.93} & 185 & 528 & \best{17006} & \best{17.06} & \best{173} & \best{7.75} & 35.97 & \best{153} & \best{398} & 108916 & 21.42 & 5405 & 32.10 \\ [0.5ex]
33 & 0.1 & --- & --- & --- & --- & --- & --- & \best{25.13} & 191 & 533 & \best{67929} & \best{18.49} & \best{343} & \best{31.06} & --- & \best{157} & \best{403} & --- & --- & --- & --- \\ [0.5ex]
34 & 0.1 & --- & --- & --- & --- & --- & --- & \best{12.97} & 201 & 566 & \best{33338} & \best{19.41} & \best{155} & \best{14.66} & 112.27 & \best{165} & \best{429} & 324304 & 23.20 & 5502 & 124.21 \\ [0.5ex]
35 & 0.1 & --- & --- & --- & --- & --- & --- & \best{101.40} & 222 & 663 & \best{249512} & \best{21.78} & \best{1567} & \best{123.62} & --- & \best{186} & \best{503} & --- & --- & --- & --- \\ [0.5ex]
36 & 0.1 & --- & --- & --- & --- & --- & --- & \best{100.20} & 228 & 683 & \best{279273} & \best{21.41} & \best{1542} & \best{124.73} & --- & \best{190} & \best{518} & --- & --- & --- & --- \\ [0.5ex]
37 & 0.1 & --- & --- & --- & --- & --- & --- & \best{65.86} & 227 & 659 & \best{159056} & \best{20.55} & \best{658} & \best{77.57} & --- & \best{188} & \best{499} & --- & --- & --- & --- \\ [0.5ex]
38 & 0.1 & --- & --- & --- & --- & --- & --- & --- & 240 & 712 & --- & --- & --- & --- & --- & \best{200} & \best{540} & --- & --- & --- & --- \\ [0.5ex]
\hline \\[0.2ex] 
\end{tabular}
\caption{Comparison of \Problog, \ASProblog, and \ASProblogS on the Graph Reliability problem}
\label{table:graph_reliability}
\end{table*}

Table \ref{table:graph_reliability} shows the comparison between \Problog, \ASProblog and \ASProblogS
on \GraphReliability on random directed graphs. The instance is specified by $N$, the number of nodes, and $P$,
the probability of an edge between any two nodes.
The solvers are compared on the following parameters: time in seconds (Time), number of variables and
clauses in the input program of \dsharp ($V$ and $C$ resp.), number of decisions ($D$), average decision
level of backtrack due to conflict or satisfaction ($A$), the size in megabytes of the \ddnnf produced by \dsharp
($S$), and for \ASProblog and \ASProblogS, the number of loops produced during the search ($L$).
Each number in the table represents the median value of that parameter from 10 random instances of the size in the row.
The median is only defined if there are at least (6) output values.
A `---' represents memory exhaustion or a timeout of 5 minutes, whichever occurs first.
A `---' in columns Time, $D$, $A$, $L$, $S$ means that the solver ran out of memory
but the grounding and encoding was done successfully, while a `---' in all columns of a solver means that
it never finished encoding the problem. We show the comparison on three types of instances: small graphs
with high density, medium graphs with high to medium density, and large graphs with low density.

Clearly \ASProblog and \ASProblogS are far more scalable than \Problog.
While \Problog requires less search (since it starts with all loop formulae
encoded) the overhead of the eager encoding is prohibitive.
For all solved instances, \ASProblog has the best running time and 
\ddnnf size, illustrating that the search restriction of \ASProblogS 
degrades performance significantly. 
While the encoding for \ASProblogS is always smallest, the encoding with
copy variables and rules of \ASProblog is not significantly greater,
and yields smaller search trees and fewer loop formulae.
It is clearly the superior approach.

\ignore{
Let us make a few observations in Table \ref{table:graph_reliability}. 
Overall, \ASProblog and \ASProblogS have a clear advantage over \Problog which only solves
the first two instances in the table and manages to encode one large graph with
low density. Furthermore, \ASProblog also solves the highest number of instances. For certain parameters, there are conclusive
winners. For all solved instances, \ASProblog has the best running time and the \ddnnf size, 
\Problog has the smallest number of decisions due to complete encoding of the problem which results in excellent propagation,
and \ASProblogS has the least number of variables and clauses in the encoding, although the difference
from \ASProblog, which has additional copy variables and rules, is negligible.
Between \ASProblog and \ASProblogS,
the number of loop formulas produced by \ASProblog is always less than that of \ASProblogS
and on average, the difference is more than one order of magnitude. The last category of instances
shows how these two solvers transition to intractable instances where \ASProblog solves 4 more instances than \ASProblogS,
and the running time of latter ascends sooner. 
\ASProblog has also smaller search trees which is clearly reflected in the smaller
number of decisions and \ddnnf size. \ASProblog is not a clear winner in average decision level though
where \ASProblogS is better on four instances (rows 3-6). However, greater number of decisions, running
time and \ddnnf size and smaller average decision level only indicates that \ASProblogS has a broader
search tree with many useless subtrees that are avoided by \ASProblog.
}

\begin{figure}[t]
  \centering
  \includegraphics[scale=0.45]{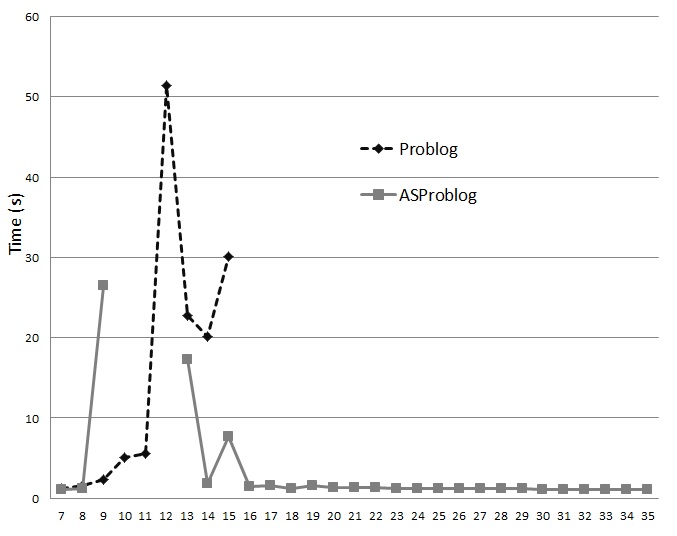}
  \caption {\SmokersFriends{} with $31$ random variables}
	\label{figure:smokers_friends}
\end{figure}

Figure \ref{figure:smokers_friends} compares the performance of \Problog and \ASProblog on \SmokersFriends
when the number of random variables is fixed to 31 and the problem size is increased. 
In the problem description, there are two sets of random variables, the $\stress$
and the $\influences$ variables. The first one exists for each person in the graph, while the
latter exists for every edge in the graph. In our setting, for an instance with $n$ persons,
the number of $\influences$ random variables is equal to $31 - n$. The rest of the $\influences$
variables are fixed to true or false at run time.
For the smallest instances of sizes 7 and 8, \Problog and \ASProblog have similar performance.
For instances 9 to 12, \Problog does better than \ASProblog where the latter cannot solve instances
11 and 12 due to memory exhaustion.
The reason is that the complete encoding in \Problog propagates better and the extra
unfounded set check at each node in the search tree in \ASProblog does not pay off. 
But as the number of people increases and the number of probabilistic edges becomes less, the problem becomes easier for \ASProblog
but not for \Problog.
The reason is that by fixing the probabilistic edges, we are just left with
$n$ external rules, and many internal rules, making many founded variables logically equivalent to each other.
In the last instance, the number of loop formulas required for the problem is only one! 
Our lazy approach benefits from this structure in the problem, while \Problog does not.
Our experiments with the same range of instances but with number of random variables
fixed to 33 and 35 show similar behaviour of \Problog and \ASProblog where initially,
\Problog does better, followed by hard instances for both, and finally, \ASProblog detecting
the structure and solving the last few instances in less than 2 seconds.

\ignore{
instance,problog_31,asproblog_31,problog_33,asproblog_33,problog_35,asproblog_35
7,1.21,1.12,1.13,1.07,1.18,1.12,
8,1.61,1.25,1.31,1.26,1.30,1.17,
9,2.28,26.44,2.04,9.74,1.44,39.85,
10,5.05,---,8.82,---,9.23,---,
11,5.54,---,3.27,---,---,---,
12,51.43,---,7.88,56.60,---,---,
13,22.77,17.22,---,---,---,---,
14,20.16,1.83,---,---,---,---,
15,30.07,7.64,218.04,38.20,163.54,---,
16,---,1.50,---,34.40,---,---,
17,---,1.58,---,---,---,171.49,
18,---,1.24,---,4.41,---,---,
19,---,1.62,---,1.36,---,---,
20,---,1.29,---,18.42,---,---,
21,---,1.30,---,1.36,---,126.30,
22,---,1.28,---,1.36,---,1.41,
23,---,1.27,---,1.33,---,1.45,
24,---,1.24,---,1.31,---,1.35,
25,---,1.24,---,1.31,---,1.40,
26,---,1.22,---,1.31,---,1.37,
27,---,1.20,---,1.26,---,1.34,
28,---,1.18,---,1.25,---,1.34,
29,---,1.16,---,1.23,---,1.34,
30,---,1.14,---,1.21,---,1.31,
31,---,1.11,---,1.19,---,1.29,
32,---,1.08,---,1.16,---,1.27,
33,---,1.09,---,1.12,---,1.24,
34,---,1.09,---,1.08,---,1.20,
35,---,1.09,---,1.09,---,1.15,
}

\section{Conclusion}

Stable model counting is required for reasoning about probabilistic logic programs with positive recursion in their rules.
We demonstrate that the current approach of translating logic programs eagerly to propositional theories is not scalable because
the translation explodes when there is a large number of recursive rules in the ground program. 
We give two methods to avoid this problem which enables reasoning about significantly bigger probabilistic logic programs.

\bibliographystyle{aaai}
\bibliography{paper}

\noproofs{
\clearpage

\newtheorem{apxTheorem}{Theorem}
\newtheorem{apxCorollary}[apxTheorem]{Corollary}
\newtheorem{apxProposition}[apxTheorem]{Proposition}

\appendix
\section{Proofs of theorems and their corollaries}

\begin{apxTheorem}
\thmain
\end{apxTheorem}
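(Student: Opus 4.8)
The plan is to prove both directions by relating the reduct and least model of $P$ under the full assignment $\sigma = \theta \cup \pi \cup \theta_r$ to those of $Q = \justres{P}{\theta}$ under $\pi$, treating the justified assignment $J = \JA(P,\theta)$ as an ``already justified bottom'' on top of which $Q$ builds the remaining derivations. Since a stable model is characterised by two conditions---satisfaction of the constraints, and agreement with the least model of the reduct on $\Founded$---I would establish each condition separately and argue that it transfers in both directions.

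First I would dispose of the remaining variables and the constraints. Using the hypothesis that every founded variable with no rule in $S$ is false in $\theta$, I would show that $\VV_r$ contains no founded variable: a founded $v \in \VV_r$ either heads a rule of $S$, forcing $v \in \vars(S) \subseteq W$, or heads no rule of $S$ and is therefore false in $\theta$, so $v \in \vars(\theta)$; both contradict $v \in \VV_r$. Hence $\theta_r$ assigns only standard variables, and since any rule or constraint mentioning such a variable must already have been eliminated in forming $S$ and $\residual{C}{\theta}$ (otherwise that variable would lie in $W$), the value of $\theta_r$ is irrelevant to stability. For the constraints I would use that $\residual{C}{\theta} \subseteq D$ with $\vars(\residual{C}{\theta}) \subseteq W$, together with the observation that $W \cap \vars(\theta) = U$ (every variable of $J$ is substituted out of both $S$ and $\residual{C}{\theta}$), so $\pi$ and $\theta$ overlap only on $U$ and agree there precisely because $\pi$ satisfies the unit clauses $\{u \mid u \in U\} \subseteq D$. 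This makes $\sigma$ a consistent assignment and shows $\sigma$ satisfies $C$ iff $\pi$ satisfies $D$.

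The core of the argument is the least-model correspondence $\Least(P^\sigma) =_{\Founded} J \cup \Least(Q^\pi)$, which I would prove by a staged fixpoint computation. By definition of $\JA$, the positive founded atoms of $J$ are exactly the atoms of $\theta$ lying in $\Least(\residual{R}{J_0(\theta)})$, so they form a closed ``first layer'' derivable using only negative and standard literals of $\theta$; the residual rules $S = \residual{R}{J}$ then encode precisely the derivations that remain once this layer is fixed, and reducing $S$ by $\pi$ yields $Q^\pi$. The delicate point is bridging the clausal residual used to define $S$ with the reduct used to define stable models: removing a rule from the residual because its head is true in $J$, or because a body literal is false in $J$, does not change what is derivable, because such a head is already correctly justified by the first layer and such a body can never fire. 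Carrying out the fixpoint in these two stages, I would conclude that a founded atom lies in $\Least(P^\sigma)$ exactly when it lies in $J$ or in $\Least(Q^\pi)$, and hence that $\sigma =_{\Founded} \Least(P^\sigma)$ holds iff $\pi =_{\Founded} \Least(Q^\pi)$, which combined with the constraint equivalence gives both parts.

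I expect the main obstacle to be the correct treatment of the atoms in $U$---the founded variables that are true but \emph{unjustified} in $\theta$---since these are exactly the case that breaks the naive approach of Example~\ref{ex:incorrect}. The crux is to show that such a $u$ lies in $\Least(P^\sigma)$ precisely when it lies in $\Least(Q^\pi)$, i.e.\ that its justification genuinely ``restarts'' inside the residual, and that the unit clause $u \in D$ forcing $u$ true, together with stability of $\pi$, is what enforces this re-justification. Here minimality of the least model is essential: I must rule out circular support in which unjustified atoms justify one another, and argue that the staged computation never derives a $U$-atom spuriously from the first layer. Once this is pinned down, the remaining verifications are routine bookkeeping over the reduct.
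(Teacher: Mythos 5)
Your proposal is correct and takes essentially the same route as the paper's own proof: your ``staged fixpoint'' argument is exactly the paper's device of splitting $R$ into $S$ and $R \setminus S$, and your central identity $\Least(P^{\sigma}) =_{\Founded} J \cup \Least(Q^{\pi})$ is precisely the paper's key equality $\Least(R^{\theta \cup \pi \cup \theta_r}) =_{\Founded} J \cup \Least(S^{\pi \cup \theta_r})$. Your peripheral steps---no founded variables in $\VV_r$, the transfer of constraint satisfaction between $C$ and $D$, and the unit clauses on $U$ forcing $\pi$ to agree with $\theta$ on the true-but-unjustified atoms---likewise match the paper's proof step for step.
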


\begin{proof}
Let $J = \JA(P, \theta)$.
Note that there cannot be a founded variable in the remaining variables $\VV_r$ since if a founded variable is not true in $\theta$
and does not have a rule in $S$, then it must be false in $\theta$ due to the given assumption.

The key point is to view $R$ as two separate sets of rules, $S$ and $R \setminus S$ and argue that we can treat them separately
for the purpose of least models. If there is any assignment that extends $\theta$, then from $R \setminus S$, we can
safely delete the rules whose bodies intersect with $W$ or $\theta_r$ as these rules are redundant
since all assignments in $J_0(\theta)$ are sufficient to imply the founded literals in $J$. 
Furthermore, the least assignment of reduct of $R \setminus S$ w.r.t. any assignment that extends $\theta$ will be exactly equal
to the founded literals in $J$. Therefore:

$$ \Least(R^{\theta \cup \pi \cup \theta_R}) =_{\Founded} J \cup \Least(S^{\pi \cup \theta_R})$$

Since $\theta_R$ does not have any founded variables and we just argued that we can delete the rules in $R \setminus S$
that have any variable from $\theta_R$ and $\theta_R$ is completely disjoint from $S$ by definition, 
we can simplify the above equality to:
$ \Least(R^{\theta \cup \pi}) =_{\Founded} J \cup \Least(S^{\pi})$.

\begin{enumerate}
\item We are given that $\pi$ is a stable model of $Q$, i.e., $\pi \models S$,
$\pi \models D$ and $\Least(S^{\pi}) =_{\Founded} \pi$. It is easy to show that
this implies that $\theta \cup \pi \models R \cup C$. Moreover, from the above
equality, we get $\Least(R^{\theta \cup \pi}) =_{\Founded} J \cup \pi$.
Since, due to constraints added in $D$, $\theta$ and $\pi$ are consistent on founded variables in $\VV \setminus \vars(J)$,
we get $\Least(R^{\theta \cup \pi}) =_{\Founded} \theta \cup \pi$.
It is easy to see that any assignment $\theta_R$ can be used to extend $\theta \cup \pi$
without affecting satisfiability or the least model, which means that $\theta \cup \pi \cup \theta_R$ is a stable model of $P$.

\item We are given that $\theta \cup \pi \cup \theta_R$ is a stable model of $P$.
By definition of residual programs, we know that $\theta \models C \setminus D$ and
the intersection of variables in $D$ and $\theta$ is empty, which means that if $D$ is non-empty,
then $\theta$ is not sufficient to satisfy $D$ which implies that $\pi \models D$ (if $D$ is empty,
then trivially $\pi \models D$). A similar
argument for the case $\pi \models \residual{R}{\theta}$ can be made.
Given that $\theta$ and $\pi$ are consistent, we can also see that $\pi \models S \setminus \residual{R}{\theta}$
which means that $\pi \models S$. For least model, from the equality that we discussed previously,
we are given that $\theta \cup \pi =_{\Founded} J \cup \Least(S^{\pi})$. Again,
since $\theta \cup \pi =_{\Founded} J \cup \pi$, we can derive that $\pi =_{\Founded} \Least(S^{\pi})$
which means that $\pi$ is a stable model of $Q$.
\end{enumerate}
\end{proof}

\begin{apxCorollary}
\cormain
\end{apxCorollary}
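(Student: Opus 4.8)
The plan is to reduce the corollary directly to Theorem~\ref{th:main} by showing that the stable models of the decomposed program $Q = \justres{P}{\theta}$ are exactly the componentwise combinations of stable models of its disjoint pieces. Concretely, I would set $\pi = \pi_1 \cup \ldots \cup \pi_k$, which is a complete assignment over $W = W_1 \cup \ldots \cup W_k$ since the $W_i$ are pairwise disjoint and cover $W$. The whole argument then rests on a single lemma: $\pi$ is a stable model of $Q$ if and only if each $\pi_i$ is a stable model of $Q_i$. Once this equivalence is established, both parts follow immediately from the two parts of Theorem~\ref{th:main} applied to $\pi$, because $\theta \cup \pi \cup \theta_r = \theta \cup \pi_1 \cup \ldots \cup \pi_k \cup \theta_r$ and the remaining-variable set $\VV_r$ coincides with the one in the theorem.

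To establish the lemma I would check the three defining conditions of a stable model componentwise. Since $S = S_1 \cup \ldots \cup S_k$ and $D = D_1 \cup \ldots \cup D_k$, and each $S_i$ and $D_i$ mentions only variables in $W_i$, satisfaction factorizes: $\pi \models S$ iff $\pi_i \models S_i$ for all $i$, and likewise $\pi \models D$ iff $\pi_i \models D_i$ for all $i$, because the restriction of $\pi$ to $W_i$ is exactly $\pi_i$ and the clauses in block $i$ are insensitive to variables outside $W_i$.

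The crux is the least-model condition $\Least(S^{\pi}) =_{\Founded} \pi$. Here I would first argue that the reduct commutes with the decomposition: forming the reduct of a rule depends only on the literals over that rule's own variables, so $S^{\pi} = S_1^{\pi_1} \cup \ldots \cup S_k^{\pi_k}$. The key structural fact is then that the least model of a union of positive programs over pairwise disjoint variable sets is the union of the individual least models, $\Least(S^{\pi}) = \Least(S_1^{\pi_1}) \cup \ldots \cup \Least(S_k^{\pi_k})$, since a rule in one block can never derive a founded variable of another block, so the monotone closure operator acts independently on each block. Consequently $\Least(S^{\pi}) =_{\Founded} \pi$ iff $\Least(S_i^{\pi_i}) =_{\Founded} \pi_i$ for every $i$. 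Combining the three componentwise equivalences yields the lemma. I would also note that the hypothesis of Theorem~\ref{th:main}---that any founded variable without a rule in $S$ is false in $\theta$---transfers to each $Q_i$, because a founded variable of $W_i$ has a rule in $S$ exactly when it has one in $S_i$.

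The main obstacle is precisely this least-model factorization: I must justify rigorously that disjointness of the sets $W_i$ decouples both the reducts and their least fixpoints, so that no founded variable of one component is ever supported through the rules of another. This is where the hypothesis $W_i \cap W_j = \emptyset$ does all the work, and it is the step where a careless argument could silently assume the very independence that the definitions of reduct and least model must be shown to provide.
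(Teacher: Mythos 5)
Your proof is correct, and its skeleton matches the paper's: both arguments reduce Corollary~\ref{cor:main} to Theorem~\ref{th:main} via the single key lemma that $\pi = \pi_1 \cup \ldots \cup \pi_k$ is a stable model of $Q$ iff each $\pi_i$ is a stable model of $Q_i$. The difference is how that lemma is discharged. The paper does not prove it at all: it cites the \emph{Module Theorem} of \cite{modularity_asp}, observing that fully disjoint programs are a special case of modules with no positive interdependencies (their Corollary~1). You instead prove the special case from first principles, checking that satisfaction of $S$ and $D$ factorizes blockwise, that the reduct commutes with the disjoint union ($S^{\pi} = S_1^{\pi_1} \cup \ldots \cup S_k^{\pi_k}$), and that the least model of a disjoint union of positive programs is the union of the componentwise least models because the monotone closure operator cannot derive an atom of one block from rules of another. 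This is exactly where the hypothesis $W_i \cap W_j = \emptyset$ is consumed, and you correctly flag it as the load-bearing step; you also correctly verify that $W = W_1 \cup \ldots \cup W_k$ so that the remaining-variable sets coincide and Theorem~\ref{th:main} applies verbatim. Your approach buys self-containedness and makes visible the precise independence property the paper outsources to the citation; the paper's approach buys brevity and situates the corollary inside a more general modularity framework that also covers overlapping but compatible modules, which full disjointness does not require. One small remark: your observation that the theorem's hypothesis (founded variables with no rule in $S$ are false in $\theta$) transfers to each $Q_i$ is harmless but unnecessary, since you invoke Theorem~\ref{th:main} only once, on $Q$ and the combined $\pi$, never componentwise.
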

\begin{proof}
Theorem \ref{th:main} says that the justified residual program can be solved in isolation and its results
can be combined with the parent program. Furthermore, since all ASP programs $Q_1, \ldots, Q_k$ are completely
disjoint, both 1 and 2 follow from the \emph{Module Theorem} (Theorem 1) as given in \cite{modularity_asp} which says
that two mutually compatible assignments that are stable models of two respective programs can be joined
to form a stable model of their union program and conversely, a stable model of the combined program can be split
into stable models of individual programs, as long as there are no positive interdependencies between
the two programs. Ours is a simple special case of Corollary 1 in \cite{modularity_asp} where all programs and their sets of variables are completely disjoint.
\end{proof}

\projectResultsSetup

\begin{apxProposition}
\propcopyunsat
\end{apxProposition}
\begin{proof}[Proof sketch]
Say $\theta$ has an extension $E$ that is a stable model of $P$. We can show that
running unit propagation on $Q$ and $E$ yields a solution of $Q$ that is an extension of $\pi$, which contradicts what is given.
\end{proof}

\begin{apxTheorem}
\thprojection
\end{apxTheorem}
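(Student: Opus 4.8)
The plan is to prove the identity by unfolding both programs into their three constituent pieces --- the non-copy constraints, the rules, and the unit constraints contributed by the set $U$ of true-but-unjustified founded variables --- and showing that each piece of $\project(Q,\pi)$ coincides with the corresponding piece of $\justres{P}{\theta}$. Writing $J = \JA(P,\theta)$, everything rests on a single bridging fact: under the propagation-closed assignment $\pi$, a copy variable $v'$ is true exactly when the founded variable $v$ is justified, that is, exactly when $v \in J$.

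First I would establish this bridging lemma. Reading each copy clause $v' \vee \neg f_1' \vee \ldots \vee \neg f_n' \vee \neg sn_1 \vee \ldots \vee \neg sn_m$ as a Horn rule with head $v'$ and ``facts'' given by the non-copy standard and negative literals of $\pi$ (which are precisely those of $\theta$, since copies carry no such information), unit propagation to fixpoint derives the least set of copy atoms closed under these rules. This set is in bijection with $\Least(\residual{R}{J_0(\theta)})$, the founded atoms derivable from the rules given $J_0(\theta)$; the constraint $\neg v' \vee v$ then forces each such $v$ true. Hence $v'$ is true iff $v$ is positive in $\theta$ and lies in $\Least(\residual{R}{J_0(\theta)})$, which is exactly the membership condition for $v \in J$. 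An immediate corollary is that a founded $v$ that is true in $\theta$ but whose copy $v'$ is unfixed is precisely a variable of $\vars(\theta) \setminus \vars(J)$, so the set $U$ used by $\project$ and the set $U$ of Definition~\ref{def:justres} coincide.

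With the lemma in hand I would match the three pieces. For the non-copy constraints, $\shadow$ leaves $C$ untouched and $\pi$ restricted to $\VV$ is $\theta$, so the copy-free constraints of $\residual{D}{\pi}$ are exactly $\residual{C}{\theta}$, matching that summand of $D$ in Definition~\ref{def:justres}; the unit constraints $\{v : v \in U\}$ match by the corollary above. The substantive step is the rules: a copy clause survives in $\residual{D}{\pi}$ and, after simplification, reads back as a rule iff the corresponding rule clause survives in $\residual{R}{J}$. Using the lemma, $\neg f_i'$ is false under $\pi$ iff $f_i'$ is true iff $f_i \in J$ iff $\neg f_i$ is false under $J$, so the positive founded body literals are eliminated in lock-step on both sides, while the shared standard and negative literals $sn_i$ are evaluated identically because $\pi$ and $J$ agree on all standard and negative literals. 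The residual variable set $\VV'$ then equals $\vars(S) \cup \vars(D) = W$ by construction, completing the match.

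The main obstacle is the rule-matching, and within it the delicate case is a founded atom occurring negatively in a rule body. Such a body literal $\neg c$ contributes the plain literal $c$ to both the copy clause and the rule clause, but the copy side evaluates $c$ under $\pi$ --- where a true-but-unjustified $c$ satisfies and removes the clause --- whereas the rule side evaluates $c$ under $J$, where such a $c$ is unfixed and the clause persists. I would reconcile this by observing that a true-but-unjustified $c$ lies in $U$, so both programs acquire the unit constraint $c$; the persisting rule clause is then satisfied and hence redundant given that constraint, so the two programs still agree as stable-model specifications. Carefully discharging this case (or, in the purely monotone inductive-definition setting that motivates the paper, simply noting that founded atoms appear only positively in bodies, so it never arises), together with the literal-by-literal bookkeeping of the two residual simplifications, is where the real work lies; the remaining variable-set accounting is routine.
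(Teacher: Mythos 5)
Your proposal follows essentially the same route as the paper's own proof: your bridging lemma (in a propagation-closed $\pi$, the copy $v'$ is true iff $v \in \JA(P,\theta)$) is precisely the paper's central claim that unit propagation over the copy clauses, seeded by $J_0(\pi)$, simulates the computation of $\Least(\residual{R}{J_0(\theta)})$ and can derive no other positive copy literal because decisions on copy variables are forbidden; the subsequent piecewise matching of non-copy constraints, rules, the $U$-unit constraints and the variable sets is exactly the paper's decomposition. Where you genuinely depart is your ``delicate case'' of a founded atom $c$ occurring negatively in a rule body, and there you are \emph{more} careful than the paper, which never confronts it: the paper's proof residuates each copy clause only with respect to $J_0(\pi) \cup J'$, whereas $\project(Q,\pi)$ is defined via $\residual{D}{\pi}$ under the \emph{full} assignment. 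So when $c$ is true-but-unjustified, the copy clause (which contains the original literal $c$ positively) is satisfied and deleted from $\residual{D}{\pi}$, while the corresponding rule survives in $\residual{R}{\JA(P,\theta)}$ because $c$ is unfixed in the justified assignment. You are right that the theorem's literal syntactic equality can then fail; your repair---both programs acquire the unit constraint $c$ via $U$, making the surviving rule vacuous in every stable model, so the two programs agree as stable-model specifications---is sound and preserves everything the downstream corollaries actually need (cube detection, caching of counts, decomposition).

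Two bookkeeping points if you flesh this out. First, in that case the variable sets can also differ as stated: the redundant surviving rule may contribute variables to $\vars(S)$ that are absent from $\VV'$; this is count-neutral, since any such founded variable with no rule is forced false by the standing assumption of Theorem~\ref{th:main}, and any such unfixed standard variable contributes the same factor of two whether it sits inside the component or among the remaining variables---but it should be said. Second, your weakened conclusion propagates: Corollary~\ref{cor:copycache}'s equality $\justres{P}{\theta_1} = \justres{P}{\theta_2}$ must likewise be read modulo such redundant rules. In short, same skeleton as the paper, but you expose and patch a real discrepancy between the paper's definition of $\project$ (residual under full $\pi$) and what its proof actually manipulates (residual under $J_0(\pi)\cup J'$); the paper's argument is silently restricted to the case where founded atoms occur only positively in bodies, which happens to cover its motivating examples but not the stated generality.
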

\begin{proof}
Recall the definition of justified assignment:
$\JA(P, \theta) = J_0(\theta) \cup \{v \in \Founded | v \in \theta, v \in \Least(\residual{R}{J_0(\theta)})  \}$.
Also recall that any copy constraint $r'$ in $Q$ has the form:
$v' \vee \neg f_1' \vee \ldots \neg f_n' \vee \neg sn_1 \vee \ldots \neg sn_m$ and by definition, each $r'$ is the copy
of a rule $r$ in $P$, which is $v \leftarrow f_1 \wedge \ldots f_n \wedge sn_1 \wedge \ldots \wedge sn_m$.
Recall that $v', f_1', \ldots, f_n'$ are copy variables of $v, f_1, \ldots, f_n$ respectively,
$f_1, \ldots, f_n$ are positive literals in $r$ and $sn_1, \ldots, sn_m$ are either standard or negative literals in $r$.
We show that the sets of rules, constraints, and variables of $\project(Q,\pi)$ and $\justres{P}{\theta}$
are equal, therefore, they are equal. We begin by reasoning about the sets of rules.

Let us focus on the \emph{seed} of the justified assignment $J_0(\theta)$. It is easy to see that
$J_0(\pi) \cap \VV = J_0(\theta)$. Since $r'$ and $r$ share all standard and negative literals, their
residuals w.r.t. these literals will be simplified in exactly the same way, i.e., if 
$\residual{r'}{J_0(\pi)} = v' \vee \neg f_1' \vee \ldots \neg f_n' \vee \neg sn_1 \vee \ldots \neg sn_k$,
then $\residual{r}{J_0(\theta)} = v \leftarrow f_1 \wedge \ldots f_n \wedge sn_1 \wedge \ldots \wedge sn_k$.
The core point of the proof is that running unit propagation on the set of copy rules $\residual{r'}{J_0(\pi)}$ is
analogous to computing $\Least(\residual{R}{J_0(\theta)})$. Each application of unit propagation on $r'$ that derives
$v'$ must also derive $v$ in $\residual{r}{J_0(\theta)}$. This means that if, due to this propagation,
the set of copy variables $J'=\{v_1', \ldots, v_j'\}$ is derived, then $\JA(P, \theta) \setminus J_0(\theta) = \{v_1, \ldots, v_j\}$.
Furthermore, since no decisions on copy variables are allowed, and there is no other constraint that can possibly
derive a literal $v'$, unit propagation cannot derive any other positive copy literal.
Now, let us view $\project(Q,\pi)$ as individual applications of a function $\projectr$ on each copy rule $r'$ to produce $r$.
We can see that $\projectr(\residual{r}{J_0(\pi) \cup J'}) = \residual{r}{\JA(P, \theta)}$.
Therefore, the set of rules in $\project(Q,\pi)$ and $\justres{P}{\theta}$ is exactly the same.

From above, it also follows that the set $U$ in the construction of $\project(Q,\pi)$ and the set $U$ in Definition \ref{def:justres}
are equal. Since $\theta$ is just the restriction of $\pi$ on non-copy variables, the residual of any constraint that has non-copy variables only is the same
and since the consistency constraints due to $U$ are also the same, the set of constraints $C'$ and $C$ are equal. Since the rules
and constraints are equal in $\project(Q,\pi)$ and $\justres{P}{\theta}$, their variables are also equal.
\end{proof}

\begin{apxCorollary}
\corcopycube
\end{apxCorollary}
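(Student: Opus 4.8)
The plan is to reduce the whole statement to Theorem~\ref{th:projection}, which already identifies $\justres{P}{\theta}$ with $\project(Q,\pi)$, and then to show that an empty residual of the copy program collapses $\project(Q,\pi)$ to a program with no rules, no constraints, and only standard variables left free. First I would unpack the hypothesis concretely: ``$\residual{Q}{\pi}$ has no rules or constraints'' means every clause of $Q = \shadow(P)$ --- the rule clauses, the copy clauses, the copy constraints $\neg v' \vee v$, and the original constraints in $D$ --- is satisfied by $\pi$, so the counter has genuinely found a solution cube of $Q$ with $k$ unfixed variables. By Theorem~\ref{th:projection} it then suffices to prove that $\project(Q,\pi)$ is the empty program and that its $k$ free variables are exactly the remaining variables $\VV_r$.

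The core of the argument, and the step I expect to be the main obstacle, is to rule out the two pathologies of Example~\ref{ex:incorrect}: that a founded variable is left undecided, or that a true founded variable is left unjustified. This is precisely where the standing assumption that $\pi$ is closed under unit propagation and unfounded-set propagation is indispensable, and I would argue it in three moves. (i)~Every founded variable is fixed: were a founded $v$ unfixed while all rule clauses are satisfied, then every rule with head $v$ would have a falsified body, so $v$ would sit in an unfounded set and propagation would force it false, contradicting that it is unfixed. (ii)~Every true founded $v$ is justified, hence its copy $v'$ is derived true. This is exactly the least-model / copy-clause correspondence established inside the proof of Theorem~\ref{th:projection}, now guaranteed to reach completion because unfounded-set closure forbids any true-but-unsupported founded variable; dually, a false $v$ forces $v'$ false through $\neg v' \vee v$. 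Consequently every copy variable is fixed and the set $U = \{v \in \Founded \mid v \text{ true and } v' \text{ unfixed}\}$ used in the construction of $\project$ is empty.

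Given these facts I would finish by inspecting the construction of $\project(Q,\pi)$. Its rules come only from residual copy clauses, and its constraints only from residual non-copy constraints together with the unit constraints $\{v \mid v \in U\}$; since the residual is empty and $U = \emptyset$, all of these vanish, so $\justres{P}{\theta} = \project(Q,\pi)$ has empty rule and constraint sets and variable set $W = \emptyset$. By (i) and (ii) the only unfixed variables of $Q$ are standard variables outside $\vars(\theta)$, which is precisely $\VV_r = \VV \setminus (W \cup \vars(\theta))$, so there are exactly $k$ of them and no founded variable is remaining --- which is what the hypothesis of Theorem~\ref{th:main} requires. Applying Theorem~\ref{th:main}(1) to the trivial stable model of the empty program $\justres{P}{\theta}$ then shows that every one of the $2^k$ assignments to $\VV_r$ extends $\theta$ to a distinct stable model of $P$, establishing that $\theta$ is a stable model cube of $\justres{P}{\theta}$ of size $2^k$.
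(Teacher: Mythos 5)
Your proposal is correct and takes essentially the same route as the paper's own proof: reduce via Theorem~\ref{th:projection} to showing that $\project(Q,\pi)$ is empty (including $U=\emptyset$), and argue that all founded variables and their copies must be fixed in $\pi$, so the $k$ unfixed variables are standard and the cube has size $2^k$. You merely expand the details that the paper compresses into ``we can show that all founded variables and their copies must be fixed''; just note that your step~(ii) is sound only because the empty residual guarantees every copy clause is satisfied --- unfounded-set closure alone does \emph{not} forbid true-but-unjustified founded variables in partial assignments (that is exactly Example~\ref{ex:incorrect}), so the satisfied copy clauses, which force every rule of a true variable with unfixed copy to have a false body, are an essential ingredient of the argument, as your opening paragraph does stipulate.
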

\begin{proof}
Since $\residual{Q}{\pi}$ is empty, $\project(Q,\pi)$ is also empty,
and since $\justres{P}{\theta} = \project(Q,\pi)$, this means that $\justres{P}{\theta}$ is a stable model cube.
Furthermore, we can show that all founded variables and their copies must be fixed in $\residual{Q}{\pi}$,
which means all $k$ variables must be standard variables, therefore, the size of stable model cube of $\justres{P}{\theta}$
is $2^k$.
\end{proof}

\begin{apxCorollary}
\corcopycache
\end{apxCorollary}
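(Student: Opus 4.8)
The plan is to reduce the statement to a syntactic fact about $\project$ and then invoke Theorem~\ref{th:projection}. By that theorem, $\justres{P}{\theta_1} = \project(Q,\pi_1)$ and $\justres{P}{\theta_2} = \project(Q,\pi_2)$, so it suffices to show $\project(Q,\pi_1) = \project(Q,\pi_2)$. In other words, I want to argue that $\project(Q,\cdot)$ \emph{factors through the residual}: its output depends on $\pi$ only through $\residual{Q}{\pi}$. Since by hypothesis $\residual{Q}{\pi_1} = \residual{Q}{\pi_2}$, i.e. $\residual{R}{\theta_1} = \residual{R}{\theta_2}$ and $\residual{D}{\theta_1} = \residual{D}{\theta_2}$, this factorization immediately yields the desired equality.

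To establish the factorization I would walk through the construction of $\project(Q,\pi)$ component by component. The rules $R'$ are read off one-for-one from the residual copy clauses, and the copy-free constraints of $C'$ are exactly the copy-free residual constraints; both are determined directly by the residual clause and rule sets, so they agree on the two sides. The only piece that is not visibly a function of $\residual{Q}{\pi}$ is the set $U$ of founded variables that are true but whose copy is still unfixed, together with the unit constraints $\{v \mid v \in U\}$ it adds to $C'$.

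The hard part will therefore be showing that $U$, and the full $\pi$-residual constraints $\residual{D}{\pi}$ used literally in the construction, are themselves determined by $\residual{Q}{\pi}$ (which is phrased in terms of the $\theta$-residuals). Here I would lean on the standing assumption that $\pi$ is closed under unit propagation and unfounded-set propagation and that search never branches on copy variables: the status of each copy variable $v'$ --- fixed true, fixed false, or unfixed --- is obtained by running this deterministic propagation on the copy clauses and the consistency constraints $\neg v' \vee v$ sitting inside $\residual{D}{\theta}$. Because the propagation is deterministic, the resulting copy status is a function of $\residual{Q}{\pi}$ alone; equal residual data on the two sides therefore forces the same copy status, hence the same $U$ and the same $\residual{D}{\pi}$. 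With $R'$, the copy-free constraints, and the $U$-constraints all matching, $\project(Q,\pi_1) = \project(Q,\pi_2)$, and Theorem~\ref{th:projection} closes the argument. I expect the delicate bookkeeping to be precisely this claim that copy-variable status is recoverable from the residual; everything else is routine comparison of residual clause sets.
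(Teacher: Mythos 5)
Your proposal is correct and takes essentially the same route as the paper, whose entire proof is the one-liner that the corollary ``follows directly from Theorem~\ref{th:projection} and definition of $\project$'' --- your factorization of $\project(Q,\cdot)$ through $\residual{Q}{\pi}$ is exactly the content of that appeal, spelled out. The delicate point you flag, namely that the copy-variable statuses (and hence $U$ and the $\pi$-residual constraints used literally in the construction of $\project$) are recoverable from the residual data by deterministic propagation under the standing fixpoint assumption, is precisely what the paper leaves implicit, and your treatment of it is sound.
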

\begin{proof}
Follows directly from Theorem \ref{th:projection} and definition of $\project$ function.
\end{proof}

\begin{apxCorollary}
\corcopydecomp
\end{apxCorollary}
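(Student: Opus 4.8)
The plan is to derive this corollary directly from Theorem~\ref{th:projection}, which identifies $\justres{P}{\theta}$ with $\project(Q,\pi)$, by showing that the $\project$ construction is \emph{modular}: it sends a variable-disjoint decomposition of $\residual{Q}{\pi}$ to a variable-disjoint decomposition of its image. Concretely, since $\justres{P}{\theta}=\project(Q,\pi)$, it suffices to prove that $\project(Q,\pi)=\bigcup_{i=1}^{k}\project(Q_i,\pi_i)$ with the programs $\project(Q_i,\pi_i)$ pairwise variable-disjoint, where $\pi_i$ is the restriction of $\pi$ to $\vars(Q_i)$.

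First I would recall that $\project$ is defined element by element on the residual constraints $\residual{D}{\pi}$: each non-copy constraint is copied verbatim, each copy-rule clause $v'\vee\neg f_1'\vee\cdots\vee\neg sn_m$ becomes the rule $v\leftarrow f_1\wedge\cdots\wedge sn_m$, and each $v\in U$ contributes a unit constraint $v$. The only renaming involved is the bijection $v'\mapsto v$ between copy and founded variables; all standard and negated variables pass through unchanged. Hence the variables $\project$ produces from a given residual constraint are exactly the images of the variables of that constraint under this bijection, and two constraints can share a produced variable only if they already share a variable after identifying each $v'$ with $v$. The crucial observation is that this identification never bridges two components: if both $v$ and $v'$ occur (unfixed) in $\residual{Q}{\pi}$, then the consistency constraint $\neg v'\vee v$ of $Q$ cannot have been satisfied (that would require $v'$ false or $v$ true, fixing one of them), so $\neg v'\vee v$ survives into $\residual{D}{\pi}$ and forces $v$ and $v'$ into the same $Q_i$. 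Thus the bijection only identifies variables that already lie in a common component, the decomposition is preserved, and each $Q_i$ maps into a program whose variables are contained in the image of $\vars(Q_i)$, giving pairwise disjointness.

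It then remains to match the unit constraints coming from $U$ to the components. A variable $v\in U$ is true but unjustified with $v'$ unfixed; I would argue it is placed in the component containing $v'$ (equivalently the copy-rule clauses that $\project$ turns into rules for $v$), and invoke the proof of Theorem~\ref{th:projection} to equate this $U$ with the $U$ of Definition~\ref{def:justres} so that no constraints are created or lost. The hard part will be the edge case in which a set-but-unjustified $v$ has its copy $v'$ absent from every residual clause, for then the unit constraint $v$ would belong to no $Q_i$; here I would lean on the standing assumption that $\pi$ is closed under unfounded-set propagation, which forbids keeping such a $v$ true once all of its supporting rules are falsified, and so guarantees that a surviving copy clause keeps $v'$ (and hence the corresponding rule and unit constraint for $v$) inside a genuine component. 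With this the two sides coincide constraint for constraint, and Theorem~\ref{th:projection} yields the claimed decomposition of $\justres{P}{\theta}$ into the $P_i=\project(Q_i,\pi_i)$.
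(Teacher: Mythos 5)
Your proof follows essentially the same route as the paper's: invoke Theorem~\ref{th:projection} and check that $\project$, acting constraint by constraint on $\residual{D}{\pi}$ (non-copy constraints verbatim, copy clauses renamed into rules via $v'\mapsto v$, unit clauses for $U$), maps each component $Q_i$ onto $P_i$ without merging or splitting components, the residual rules being irrelevant since $\project$ ignores them. You are in fact more explicit than the paper on the two points it glosses over---that the renaming $v'\mapsto v$ cannot bridge two components because if both $v$ and $v'$ are unfixed the surviving consistency clause $\neg v'\vee v$ already ties them into one component, and that closure under unfounded-set propagation guarantees each $v\in U$ retains a surviving copy clause anchoring its unit constraint to a genuine component---so the argument is sound as proposed.
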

\begin{proof}
Note that the disjointness of components of $\residual{Q}{\pi}$ is really determined by its constraints, and not
its rules. The residual rules are always stronger (have fewer variables) than their respective residual copy constraints,
because it is possible that a founded variables $v$ is true in $\pi$ but its copy variable $v'$ is unfixed. The opposite
is not possible and moreover, standard and negative literals are shared in a rule and its corresponding copy constraint. 
This property of residual rules is important since $\project(Q,\pi)$ works by projecting each individual copy rule to its original form in $\justres{P}{\theta}$ and completely ignores the residual set of rules in $\residual{Q}{\pi}$.

It is clear from definition of $\project(Q,\pi)$ that the projection of each rule merely replaces the copy variables
with their corresponding founded variables. This means that we can take each disjoint component of $\residual{Q}{\pi}$
and map it to its counterpart in $\justres{P}{\theta}$ using the projection function. 
Non-copy constraint translation in $\project(Q,\pi)$ is also straight-forward
and cannot combine multiple disjoint components in $\residual{Q}{\pi}$ to one component in $\justres{P}{\theta}$ or split one component
in $\residual{Q}{\pi}$ to multiple components in $\justres{P}{\theta}$. 
Finally, addition of the unary clauses for all variables in $U$ (as used in
the definition of $\project(Q,\pi)$) also does not affect the components; by definition, a variable $v$ in $U$
must have at least one copy constraint $v' \vee \neg f_1' \vee \ldots \neg f_n' \vee \neg sn_1 \vee \ldots \neg sn_m$
which will be projected to $v \leftarrow f_1 \wedge \ldots f_n \wedge sn_1 \wedge \ldots sn_m$ in $\justres{P}{\theta}$. Adding
$v$ as a constraint does not affect the component $P_i$ in which this projected rule appears in $\justres{P}{\theta}$.
\end{proof}
}

\end{document}